\def\NAT@def@citea{\def\@citea{\NAT@separator}}
\theoremstyle{plain}
\newtheorem{theorem}{Theorem}[section]
\theoremstyle{definition}
\theoremstyle{remark}
\begin{document}

\articletype{FULL PAPER}

\title{Scratch Team of Single-Rotor Robots and Decentralized Cooperative Transportation with Robot Failure}

\author{
\name{Koshi Oishi\textsuperscript{a}\thanks{CONTACT Koshi Oishi. Email: e1616@mosk.tytlabs.co.jp}, Yasushi Amano\textsuperscript{a}, and Jimbo Tomohiko\textsuperscript{b,a}}
\affil{\textsuperscript{a}Toyota Central R$\&$D Labs., Inc., 41-1, Yokomichi, Nagakute, Aichi 480-1192, Japan; \textsuperscript{b}Toyota Motor Corporation, 1, Toyota-cho, Toyota, Japan}
}

\maketitle

\begin{abstract}
Achieving cooperative transportation by aerial robot teams ensures flexibility regarding payloads and robustness against failures, which has garnered significant attention in recent years.
This study proposes a flexible decentralized controller for robots and the shapes of payloads in a cooperative transport task using multiple single-rotor robots.
The proposed controller is robust to mass and center of mass (COM) fluctuations and robot failures. Moreover, it possesses asymptotic stability against dynamics errors.
Additionally, the controller supports heterogeneous single-rotor robots.
Thus, robots with different specifications and deterioration may be effectively utilized for cooperative transportation.
This performance is particularly effective for robot reuse.
To achieve the aforementioned performance, the controller consists of a parallel structure comprising two controllers: 
a feedback controller, which renders the system strictly positive real, and
a nonlinear controller, which renders the object asymptotic to the target.
First, we confirm cooperative transportation using 8 and 10 robots for two shapes through numerical simulation.
Subsequently, the cooperative transportation of a rectangle payload (with a weight of approximately 3 kg and maximum length of 1.6 m) is demonstrated using a robot team consisting of three types of robots, even under robot failure and fluctuation in the COM.
\end{abstract}

\begin{keywords}
Unmanned aerial vehicle; Multi-agent systems; Decentralized control
\end{keywords}

\section{Introduction}\label{sec_1}
With recent advancements in aerial robots, the number of studies on aerial transportation and manipulation has increased significantly \cite{blood,KumarUAV, AEROARMS, OlleroRAL2018, Bonyan2018, AR_TAMS}.
Likewise, the demand for aerial transportation tasks has increased substantially, especially because of the impact of COVID-19 \cite{kumar2020review,wankmuller2021drones, sarker2021robotics}.
Owing to its ability to provide redundancy and system scalability for the transportation of various types of payloads, cooperative transportation using multiple aerial robots has attracted significant attention \cite{Kumar2011, jiang2012inverse, KumarRigid2013, Kumar2018survey, villa2020survey}.
Notably, the center of mass (COM) and mass of the payload should comply with the specifications of the aircraft when using a single aerial robot for transportation.
Cooperative transportation using multiple aerial robots is scalable with respect to the mass of the payload, provided the required number of robots can be added at the necessary positions without interference.
Notably, redundant cooperative transportation is a robust system because, in the event of a single robot failure, other robots can compensate for the thrust.
Furthermore, the practicality of the system improves because robots can easily be plugged in/plugged out when each robot can be controlled through a decentralized method \cite{shibata2021deep, shibata2023deep}.

\begin{figure}[t]
	\centering
	\subfloat[Heterogenous configuration]{
		\includegraphics[width=7cm]{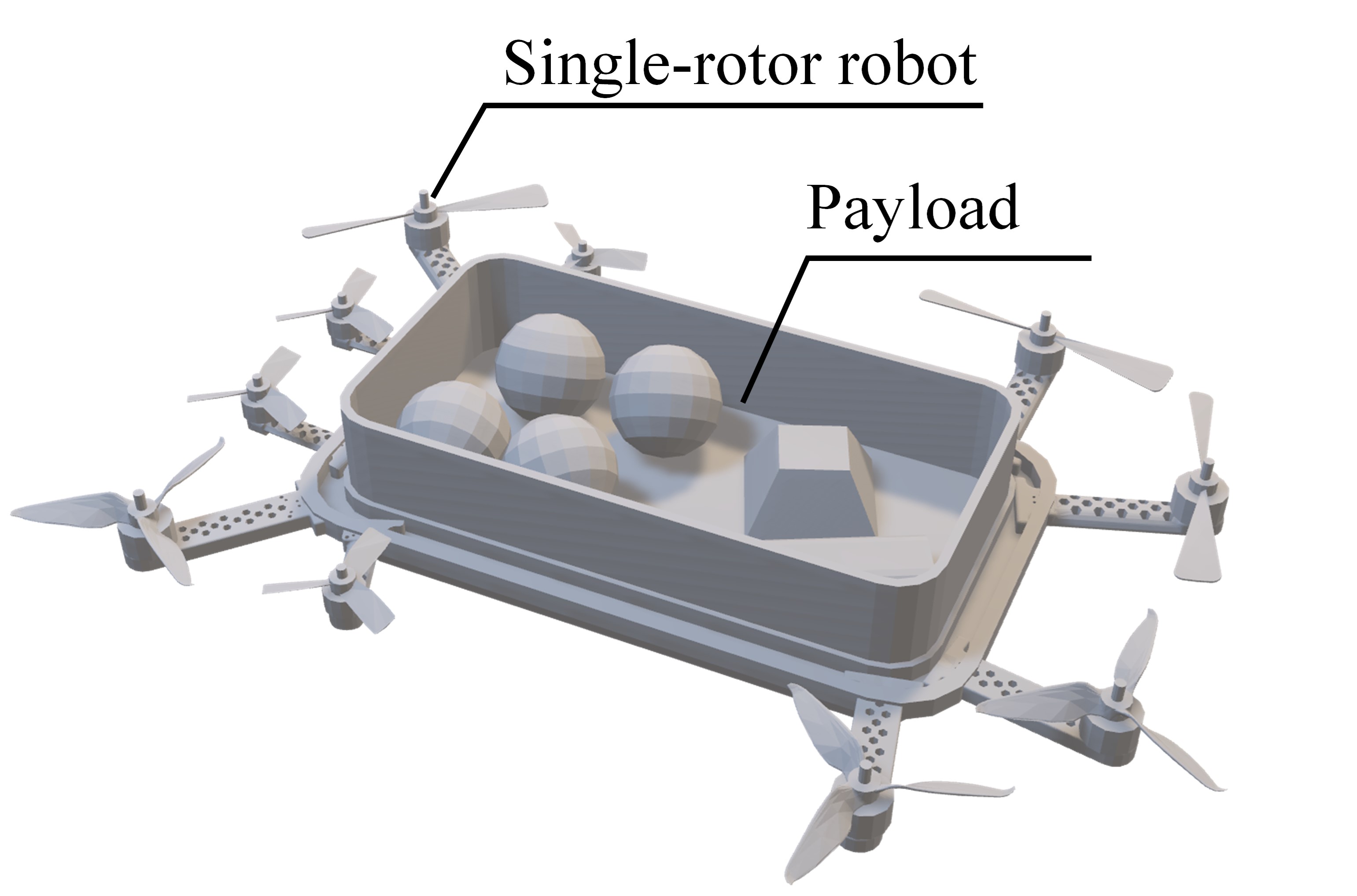}
		\label{fig:fig1a}
	} 
	\subfloat[Real flight with decentralized controller with various disturbances]{
		\includegraphics[width=7cm]{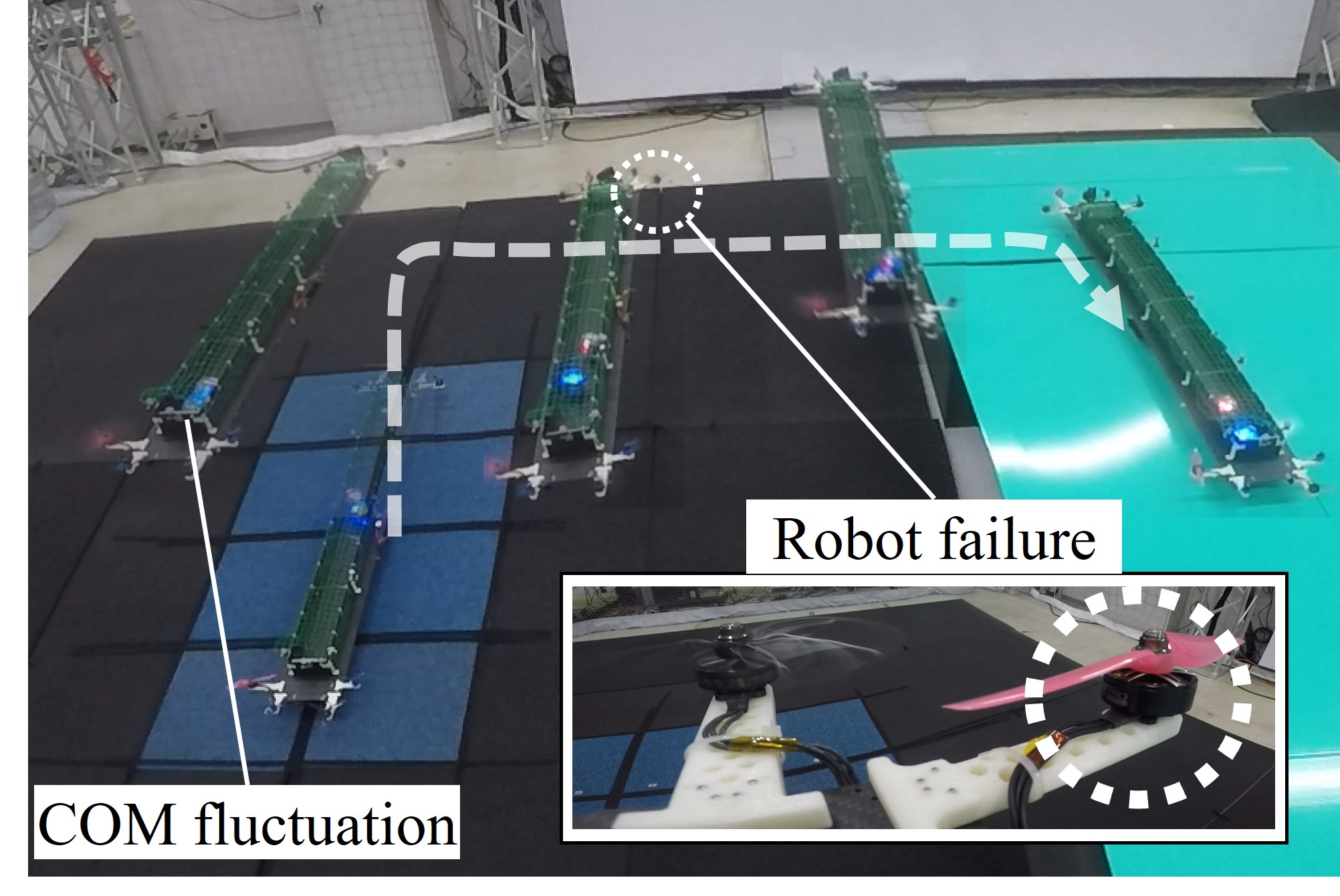}
		\label{fig:fig1b}
	}
	\caption{Concept}
	\label{fig:fig1}
\end{figure}
To adapt multi-robot systems to the real world, the configuration of the robot team should be expanded to include heterogeneous robots \cite{dorigo2013, rizk2019cooperative,WRS2020, cheraghi2022past}.
Studies on heterogeneous robots involve decomposing the overall task and assigning robots designed for different purposes \cite{Simone2015, Philipp2021, Fatemeh2021}, 
as well as achieving robots that perform the same task \cite{pereira2018, lissandrini2020decentralized, zhang2020decentralized}.
Tasks such as carrying a big desk with people of various ages correspond to a same-task situation.
The robot team should manage such tasks because robots exhibiting various performances are expected to be operational in the future.
In this study, we addressed performance such as transportation tasks with heterogeneous aerial robots.
A possible scenario is a transportation task using a robot team of different manufacturers or a team of reuse robots.
If such integration is achieved, aerial cooperative transportation would be practical in the field of robotics.

In our previous study, we constructed a novel multi-copter with a loading platform in the center and actively investigating control for cooperative transportation using multiple single-rotor robots.
Moreover, in \cite{Oishi2021ICRA}, we proposed a decentralized controller that extended an autonomous smooth switching controller (ASSC) \cite{Amano} for homogeneous single-rotor robots.
As a result, multiple single-rotor robots realized aerial cooperative transportation even when COM fluctuation or robot failure occurred.
In this study, we proposed a variable-gain ASSC (VG-ASSC) for a scratch team of heterogeneous single-rotor robots, as shown in Fig. \ref{fig:fig1a}.
We demonstrated that the proposed controller remains asymptotically stable even under dynamic COM fluctuation in a real-world setting via a unique experiment using ball robots, as shown in Fig.~\ref{fig:fig1b}.
The contributions of this study are as follows:

\begin{itemize}
\item We proposed a variable-gain decentralized controller with proven asymptotic stability for an aerial cooperative transportation using a heterogeneous team of robots
\item The performance of the proposed controller was verified using two simulations with different numbers of robots and different shapes of payload
\item The proposed controller was validated via an experiment using a real prototype with three different types of single-rotor robots under dynamic COM fluctuation and robot failure
\end{itemize}

The remainder of this paper proceeds as follows. 
Section \ref{sec_2} describes related work of our proposal;
Section \ref{sec_3} describes the model of the target system;
Section \ref{sec_4} describes the proposed controller;
Section \ref{sec_5} describes the numerical experiments;
Section \ref{sec_6} describes the results of prototype experiments;
and Section \ref{sec_7} presents the advantage of our method and discusses the results of experiments.
Finally, conclusions are presented in Section \ref{sec_8}.

\section{Related work}\label{sec_2}
Cooperative transportation using aerial robots includes robots using cables \cite{Kumar2011,jiang2012inverse, wang2013passivity, shirani2019cooperative} and those using rigid connections \cite{KumarRigid2013, Oishi2021ICRA, corah2017active, wang2018, Oishi2021IROS}.
Rigid configurations can better leverage fault-robustness than cabled configurations, because robot failures in the latter result in a loss in thrust and have a considerable impact on the fall of aerial robots.
Oung et al. \cite{oung2011}, Mu et al. \cite{mu2019universal}, and Chaikalis et al. \cite{chaikalis2023modular} modularized robots to improve scalability.
Recent developments in modular systems include studies on modules that can be detached and function independently in mid-air \cite{bai2022splitflyer}.
Based on these results, we proposed a controller for a team of single-rotor robots rigidly connected to the payload.

Previous studies have mainly focused on controls of aerial cooperative transportation.
These controls can be categorized into centralized and decentralized approaches. 
Wang \cite{wang2013passivity} proposed a centralized robust controller using linear quadratic Gaussian.
This controller was robust against mass fluctuations of approximately $20$ \% in aerial cooperative transportation using cables.
Pereira et al. \cite{pereira2018, pereira2020pose} realized the aerial transportation of a round bar object using two different types of robots.
This controller which was based on proportional-integral-derivative (PID) using an ideal control input guaranteed stability.
Furthermore, Mu et al. \cite{mu2019universal} proposed an adaptive controller that used estimated states using an inertial measurement unit mounted on each robot.
This controller ensures stable flight in various configurations.
Similarly, Chaikalis et al. \cite{chaikalis2023modular} proposed an adaptive controller that was robust to structural flexibilities.
Centralized control in aerial transportation requires equipment and infrastructure to share information because it utilizes all available data. 
Realizing stable flight via decentralized control may reduce the need for extensive information sharing and alleviate the constraints of this equipment.

A decentralized controller with a basic aerial cooperative transportation model of rigid configuration was proposed by Mellinger et al. \cite{KumarRigid2013}.
To achieve decentralized control, they assumed that the positions of each robot are known.
Wang et al. \cite{wang2018} and Shirani et al. \cite{shirani2019cooperative} achieved decentralized control using a symmetric arrangement of robot positions.
Wang et al. used symmetry to alleviate the constraints on the position of a robot and used a wrench-based compensator to guarantee stability.
Shirani et al. used feedback linearization and linear matrix inequality (LMI) to ensure stability.
Cardona \cite{cardona2021robust} utilized a leader--follower controller to achieve decentralized control of heterogeneous robots for aerial cooperative transportation and demonstrated the  
robustness of the controller against fluctuations in the reference.
Furthermore, in our previous study \cite{Oishi2021ICRA}, we proposed a decentralized controller that is robust against disturbances for aerial cooperative transportation using an identical single-rotor robot and integral controller \cite{Amano}.
In the field of robotics, decentralized control has the advantage of robustness against robot failures \cite{dorigo2013, ren2008distributed, kube2000cooperative}.
However, this advantage has not been sufficiently explored in aerial cooperative transportation.
Furthermore, to the best of our knowledge, no studies have specifically focused on heterogeneous robots and their robustness in aerial cooperative transportation involving rigid connections.

In this study, we proposed decentralized cooperative control of heterogeneous multiple single-rotor robots in a real flight environment that includes mass and COM fluctuations as well as robot failure.

\begin{figure}[t]
	\centering
		\includegraphics[keepaspectratio, scale=0.37]{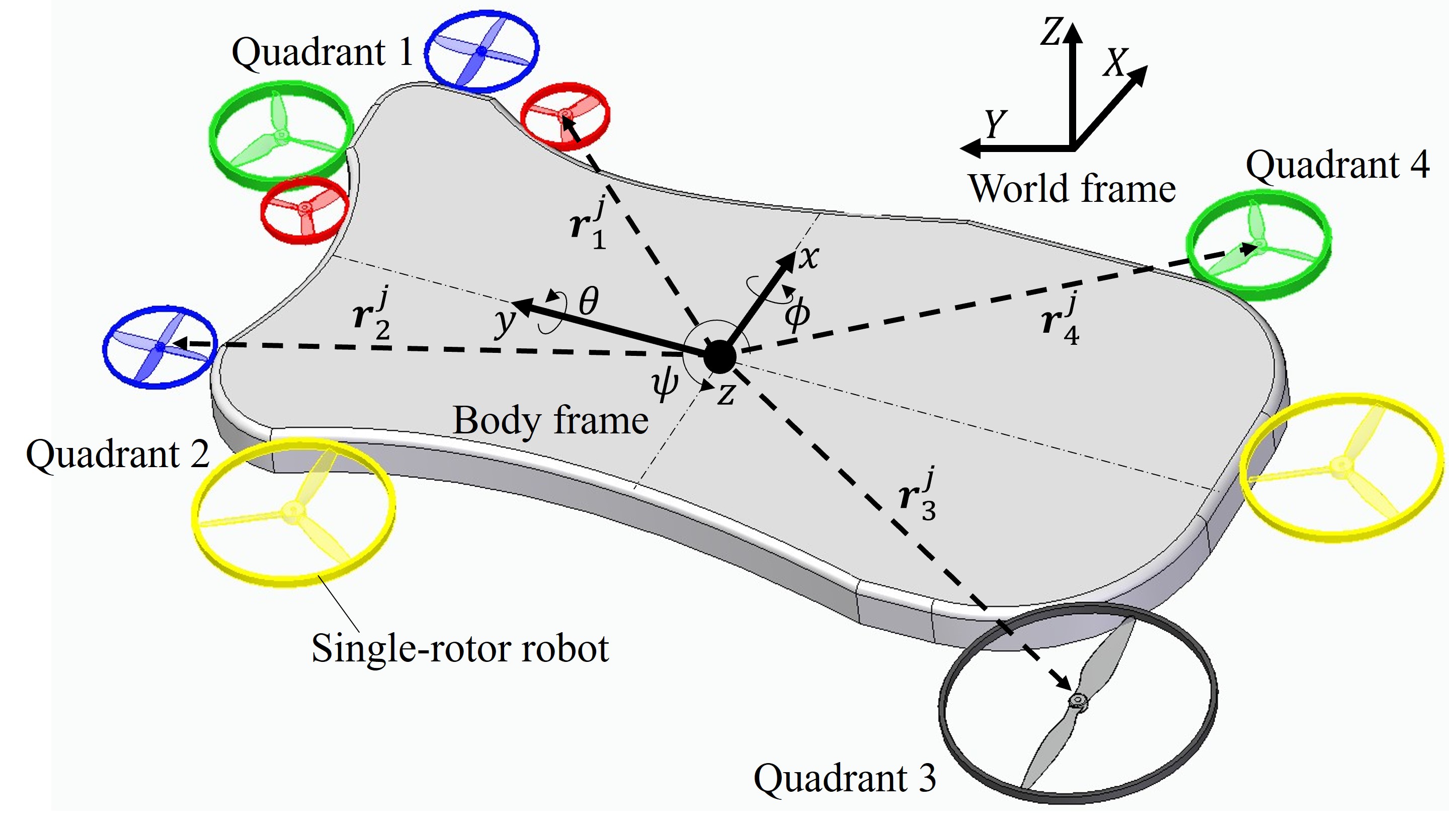}
	  \caption{Model of cooperative transportation using heterogeneous single-rotor robots ($n_1 = 4$, $n_2 = 2$, $n_3 = 1$, $n_2 = 2$)}
		\label{fig:model}
\end{figure}

\section{Flight model}\label{sec_3}
We modeled the dynamics of cooperative transportation using heterogeneous single-rotor robots while focusing on the behavior of the force acting in each quadrant, as shown in Fig. \ref{fig:model}.
Hereafter, $i$ and $j$ indicate a quadrant and robot in each quadrant, respectively.
Let $n_i$ be the number of robots in each quadrant; the dynamics of the hovering state can be approximated as follows \cite{KumarRigid2013}:
\begin{eqnarray}
	\begin{split}
	&m\left[
					\begin{array}{c}
									\ddot x \\
									\ddot y \\
									\ddot z \\                        
					\end{array}
					\right] =
					\left [
					\begin{array}{c}
									mg\theta \\
									-mg\phi \\
									0 \\
					\end{array}
					\right ] +
					\left[
					\begin{array}{c}
									0 \\
									0 \\
									\sum_{i=1}^4 \sum_{j=1}^{n_i} u_i^j \\                       
					\end{array}
					\right ]  -
					\left[
					\begin{array}{c}
									0 \\
									0 \\
									mg\\                       
					\end{array}
					\right ] \\
	&\bm{J}\left[
					\begin{array}{c}
									\ddot \phi \\
									\ddot \theta \\
									\ddot \psi \\                        
					\end{array}
					\right ] =
					\left [
					\begin{array}{c}
									\sum_{i=1}^{4} \sum_{j=1}^{n_i} \left(\bm{r}_{i}^{j} \times \bm{e}_{3}\right)_{12} u_i^j \\[7pt]
									\sum_{i=1}^4 \sum_{j=1}^{n_i} d_i^j c_{\mathrm{q}i}^j u_i^j \\
					\end{array}
					\right ],
	\end{split}
	\label{eq:dy}
\end{eqnarray}
where $x$, $y$, and $z$ are the three-dimensional positions of the payload on a body frame system;
$\phi$, $\theta$, and $\psi$ denote the roll, pitch, and yaw angles of the payload, respectively;
$u_i^j$ denotes the thrust of the robot;
$m$ and $\bm{J}$ denote the overall mass and moment of inertia, respectively;
$\bm{r}_{i}^{j} \in \mathbb{R}^3$ denotes the vector from the origin of the body frame system, 
the COM position of the payload, to the attachment position of the robot $j$ in quadrant $i$;
$\bm{e}_{3} = [0, 0, 1]^\top$ denotes the unit vector of the $z$-axis;
and $d_i^j \in \{1,-1\}$ denotes the rotational direction of the robot.
For $d_i^j=1$, the motion is clockwise, whereas for $d_i^j=-1$, it is counterclockwise.
Further, $c_{\mathrm{q}i}^j$ denotes the thrust torque conversion coefficient of the robot, 
and $g$ denotes the gravitational acceleration.
Notably, the Coriolis force is assumed to be zero because $\dot{\phi} \simeq 0$ and $\dot{\theta} \simeq 0$ are assumed during hovering.

From eq. (\ref{eq:dy}), a state-space equation can be represented as follows:
\begin{equation}
	\dot{\bm{X}} = \bm{A} \bm{X} + \bm{B}_n \bm{u}_n - \bm{w} ,
	\label{eq:st}
\end{equation}
where $\bm{X} = [\bm{x}^{\top}, \dot{\bm{x}}^{\top}, \bm{\phi}^{\top}, \dot{\bm{\phi}}^{\top}]^\top \in \mathbb{R}^{12}$, $\bm{x} = [x,y,z]^\top$ and $\bm{\phi} = [\phi, \theta, \psi]^\top$ denote the state, $\bm{u}_{n} = [u^1_1, \dots, u_4^{n_4}]^\top \in \mathbb{R}^{n}$ denotes the control input, and $n$ denotes number of total robots. 
Moreover, $\bm{A}$, $\bm{B}_n$, and $\bm{w}$ are defined as
\begin{equation*}
	\begin{split}
		\bm{A} &= \left[
			\begin{array}{cccc}
							\bm{0}_{3} & \bm{I}_{3} & \bm{0}_{3} & \bm{0}_{3} \\
							\bm{0}_{3} & \bm{0}_{3} & \bm{G}_{\mathrm{M}} & \bm{0}_{3} \\
							\bm{0}_{3} & \bm{0}_{3} & \bm{0}_{3} & \bm{I}_{3} \\
							\bm{0}_{3} & \bm{0}_{3} & \bm{0}_{3} & \bm{0}_{3} \\                        
			\end{array}
			\right]  \in \mathbb{R}^{12 \times 12} ,\\
		\bm{B}_n &= \left[
			\begin{array}{c}
						\bm{0}_{3 \times n} \\
						\bm{B}_{\mathrm{x}n} \\
						\bm{0}_{3 \times n} \\
						\bm{B}_{\mathrm{\Phi}n} \\                        
			\end{array}
			\right] \in \mathbb{R}^{12 \times n} ,\hspace{2mm}
		\bm{w} = \left[ 
			\begin{array}{c}
						\bm{0}_{3 \times 1} \\
						\bm{g}_{\mathrm{V}} \\
						\bm{0}_{3 \times 1} \\
						\bm{0}_{3 \times 1} \\                        
			\end{array}
			\right] \in \mathbb{R}^{12},
	\end{split}
\end{equation*}
where
\begin{equation*}
	\begin{split}
		\bm{G}_{\mathrm{M}} &= \left[
			\begin{array}{ccc}
				0 & g & 0 \\
				-g & 0 & 0 \\
				0 & 0 & 0 \\				
			\end{array}
		\right] , \hspace{2mm}
		\bm{g}_{\mathrm{V}} = \left[
			\begin{array}{c}
				0 \\
				0 \\
				g
			\end{array}
		\right] ,\\
		\bm{B}_{\mathrm{x} n} &= \frac{1}{m}[\bm{e}_{3}, \dots, \bm{e}_{3}] \in \mathbb{R}^{3 \times n} ,\\
		\bm{B}_{\mathrm{\Phi} n} &= \bm{J}^{-1} \left[
			\begin{array}{ccc}
				\left( \bm{r}_1^1 \times \bm{e}_{3} \right)_{12}, & \dots, & \left( \bm{r}_4^{n_4} \times \bm{e}_{3} \right)_{12} \\[4pt]
				d^1_1 c^1_{\mathrm{q}1}, & \dots, & d^{n_4}_4 c^{n_4}_{\mathrm{q}4} \\
			\end{array}
		\right]
		\in \mathbb{R}^{3 \times n}.
	\end{split}
\end{equation*}
In addition, $\bm{I}_{3}$ denotes the three-dimensional identity matrix 
and $\bm{0}_{3 \times 1}$, $\bm{0}_{3}$, and $\bm{0}_{3 \times n}$ denote zero matrices with three rows and one, three, and $n$ columns, respectively.

This study addresses fluctuations in mass and COM and robot failure.
From eq. (\ref{eq:st}), 
these fluctuations affect only the $\bm{B}_{\mathrm{x} n}$ and $\bm{B}_{\mathrm{\Phi} n}$ matrices. 
Thus, we focus on the $\bm{B}$ matrix for control robustness.

\section{Decentralized control}\label{sec_4}
We proposed a decentralized controller with two feedback controllers. 
The controller was designed following three main steps: first, an error system with four inputs and four outputs was derived from eq. (\ref{eq:st}).
Second, a robust feedback controller (RFC) renders the error system strictly positive real (SPR). 
Third, VG-ASSC ensures asymptotically stabilization to the reference of the system using only broadcasted error. 

\subsection{Error system}\label{sec_4-1}
In this section, we derive the error system from eq. (\ref{eq:st}) using input and output transformations.

\subsubsection{Input transformation}\label{sec_4-1-1}
Notably, the input matrix $\bm{B}_n$ should be a full rank for the system to be SPR.
Eq. (\ref{eq:st}) is converted to a four-input equation because the rank of the input matrix is four for a flight system based on rigidly connected rotors such as a multi-copter.
Therefore, we make the following assumptions:

\begin{itemize}
	\item The robot $j$ $(=1,...,n_i)$ in quadrant $i$ locates the representative point of quadrant $i$. Let $\bm{r}_i$ be the vector from the COM to the representative point in quadrant $i$; the vector of each robot in quadrant $i$ from the COM can be expressed as $\bm{r}_i$ ($\bm{r}_i \simeq \bm{r}^{1}_{i} \simeq \dots \simeq \bm{r}_i^{n_i}$).
	\item The rotors of robots in quadrant $i$ rotate in the same direction. Let $d_i$ be the rotation direction of the robots in quadrant $i$; the rotation direction of each robot in quadrant $i$ can be expressed as $d_i$ ($d_i = d_i^1 = \dots = d_i^{n_i}$).
	\item Among the $d_i$, at least one positive and one negative values are obtained.
	\item The average thrust--torque coefficients are the same for every quadrant. Let $c_{\mathrm{q}}$ be the average of the thrust--torque coefficient; the thrust--torque coefficient of each robot is $c_{\mathrm{q}}$ ($c_{\mathrm{q}} \simeq \frac{1}{n_1} \sum_{j=1}^{n_1} c_{\mathrm{q}1}^j \simeq \dots \simeq \frac{1}{n_4} \sum_{j=1}^{n_4} c_{\mathrm{q}4}^j$).
\end{itemize}

Under these assumptions, eq. (\ref{eq:st}) can be transformed into a four-input equation as follows:
\begin{equation}
	\dot{\bm{X}} = \bm{A} \bm{X} + \bm{B} \bm{U} - \bm{w}
	\label{eq:st4}
\end{equation}
where $\bm{U} =\left[U_1, U_2, U_3, U_4 \right]^{\top}$, $U_i = \sum_{j=1}^{n_i} u_i^j$, 
\begin{equation*}
	\begin{split}
	\bm{B} &= \left[
		\begin{array}{c}
			\bm{0}_{3 \times 4} \\
			\bm{B}_{\mathrm{X}} \\
			\bm{0}_{3 \times 4} \\
			\bm{B}_{\mathrm{\Phi}} \\
		\end{array}
	\right] \in \mathbb{R}^{12 \times 4} , \\
	\bm{B}_{\mathrm{X}} &= \frac{1}{m} \left[\bm{e}_{3}, \bm{e}_{3}, \bm{e}_{3}, \bm{e}_{3}\right] , \\
 	\bm{B}_{\mathrm{\Phi}} &= \bm{J}^{-1} \times \\
	& \left[
		\begin{array}{cccc}
			(\bm{r}_1 \times \bm{e}_{3})_{12} \!\!\!\!  & (\bm{r}_2 \times \bm{e}_{3})_{12} \!\!\!\! & (\bm{r}_3 \times \bm{e}_{3})_{12} \!\!\!\! &  (\bm{r}_4 \times \bm{e}_{3})_{12} \\
			d_1 c_{\mathrm{q}} \!\!\!\!  & d_2 c_{\mathrm{q}} \!\!\!\!  & d_3 c_{\mathrm{q}} \!\!\!\! & d_4 c_{\mathrm{q}} \\
		\end{array}
	\right].
	\end{split}
\end{equation*}
$\bm{0}_{3 \times 4}$ denotes the zero matrix with three rows and four columns.

\subsubsection{Error dynamics}\label{sec_4-1-2}
We derive an error system of eq. (\ref{eq:st4}) for the reference expressed as
\begin{equation}
	\dot{\bm{X}}_{\mathrm{r}} = \bm{A} \bm{X}_{\mathrm{r}} + \bm{B} \bm{U}_{\mathrm{r}} - \bm{w},
	\label{eq:ideal_d}
\end{equation}
where $\bm{U}_{\mathrm{r}} \in \mathbb{R}^{4}$ is the reference input, $\bm{X}_{\mathrm{r}} = [\bm{x}_\mathrm{r}^{\top}, \dot{\bm{x}}_\mathrm{r}^{\top}, \bm{\phi}_{\mathrm{r}}^{\top}, \dot{\bm{\phi}}_{\mathrm{r}}^{\top}]^{\top}$ $\in \mathbb{R}^{12}$ and $\bm{x}_{\mathrm{r}}$ and $\bm{\phi}_{\mathrm{r}}$ are the reference position and angle, respectively.
$\bm{X}_{\mathrm{r}}$ is generated under the condition $\dot{\bm{X}}_{\mathrm{r}} = 0$ to realize hovering.
Notably, $\bm{U}_\mathrm{r}$ includes thrusts for hovering and  compensating minuscule positional errors.
$\bm{U}_\mathrm{r}$ is assumed to exists such that the thrust constraints are satisfied. 
This assumption is reasonable, as the system cannot operate beyond its physical limits.

From eqs. (\ref{eq:st4}) and (\ref{eq:ideal_d}), the error dynamics can be expressed as
\begin{equation}
	\bm{\dot{\xi}} = \bm{A} \bm{\xi} + \bm{B} (\bm{U} - \bm{U}_{\mathrm{r}}),
	\label{eq:error}
\end{equation}
where 
$\bm{\xi} = [\bm{x}_{\mathrm{e}}^{\top}, \bm{\dot{x}}_{\mathrm{e}}^{\top}, \bm{\phi}_{\mathrm{e}}^{\top}, \bm{\dot{\phi}}_{\mathrm{e}}^{\top}]^\top$, $\bm{x}_{\mathrm{e}} = \bm{x} - \bm{x}_\mathrm{r} = \left[x_{\rm e}, y_{\rm e}, z_{\rm e} \right]^{\top}$, and $\bm{\phi}_{\mathrm{e}} = \bm{\phi} - \bm{\phi}_{\mathrm{r}} = \left[\varphi_{\rm e}, \theta_{\rm e}, \psi_{\rm e} \right]^{\top}$.

\subsubsection{Output transformation}\label{sec_4-1-3}
The dimensions of the input and output are the same to render the system SPR.
Therefore, we consider the following four outputs:
\begin{equation}
	\left[ 
		\begin{array}{c}
			x_{\mathrm{e}} \\
			y_{\mathrm{e}} \\
			z_{\mathrm{e}} \\
			\psi_{\mathrm{e}} \\
		\end{array}
	\right]		
		= \bm{C} \bm{\xi},
	\label{eq:out0}
\end{equation}
where $\bm{C} \in \mathbb{R}^{4 \times 12}$ is the output matrix.
If the system composed of eqs. (\ref{eq:error}) and (\ref{eq:out0}) is SPR, a positive definite symmetric matrix $\bm{Q}$ exists satisfying the following equation \cite{van2000}:
\begin{equation}
	\bm{Q}\bm{B} = \bm{C}^\top.
	\label{eq:out01}
\end{equation}
This shows that $\bm{C}$ depends on $\bm{B}$.
However, in this study, $\bm{B}$ is unknown and fluctuates within a certain range.

Therefore, using the differential values of eq. (\ref{eq:out0}) yields the following four outputs with feedthrough terms: 
\begin{equation}
	\begin{split}
		&x_{\mathrm{p}} = c_{\mathrm{x}4} \ddddot{x}_{\mathrm{e}} + c_{\mathrm{x}3} \dddot{x}_{\mathrm{e}} + c_{\mathrm{x}2} {\ddot{x}_{\mathrm{e}}} + c_{\mathrm{x}1} \dot{x}_{\mathrm{e}} + c_{\mathrm{x}0} x_{\mathrm{e}} ,\\
		&y_{\mathrm{p}} = c_{\mathrm{y}4} \ddddot{y}_{\mathrm{e}} + c_{\mathrm{y}3} \dddot{y}_{\mathrm{e}} + c_{\mathrm{y}2} {\ddot{y}_{\mathrm{e}}} + c_{\mathrm{y}1} \dot{y}_{\mathrm{e}} + c_{\mathrm{y}0} y_{\mathrm{e}} ,\\
		&z_{\mathrm{p}} =  c_{\mathrm{z}2} \ddot{z}_{\mathrm{e}} + c_{\mathrm{z}1} \dot{z}_{\mathrm{e}} + c_{\mathrm{z}0} z_{\mathrm{e}} ,\\
		&\psi_{\mathrm{p}} = c_{\mathrm{\psi} 2} \ddot{\psi}_{\mathrm{e}} + c_{\mathrm{\psi} 1} \dot{\psi}_{\mathrm{e}} + c_{\mathrm{\psi} 0} \psi_{\mathrm{e}} .\\
	\end{split}
	\label{eq:out1}
\end{equation}
From eq. (\ref{eq:error}), eq. (\ref{eq:out1}) can be expressed as: 
\begin{equation}
	\begin{split}
		\bm{\zeta} &= \bm{C} \bm{\xi} + \bm{D} (\bm{U} - \bm{U}_{\mathrm{r}}), \\
	\end{split}
	\label{eq:out2}
\end{equation}
where $\bm{\zeta} = \left[x_{\mathrm{p}}, y_{\mathrm{p}}, z_{\mathrm{p}}, \psi_{\mathrm{p}}\right]^{\top}$, $\bm{C} = [\bm{C}_0, \bm{C}_1, \bm{C}_2, \bm{C}_3] \in \mathbb{R}^{4 \times 12}$,
\begin{equation*}
	\begin{split}
		\bm{C}_0 &= \left[
			\begin{array}{ccc}
				c_{\mathrm{x}0} & 0 & 0 \\
				0 & c_{\mathrm{y}0} &  0\\
				0 & 0 & c_{\mathrm{z}0} \\
				0 & 0 & 0 \\
			\end{array}
		\right], 
	 \hspace{3.4mm} \bm{C}_1 = \left[
			\begin{array}{ccc}
				c_{\mathrm{x}1} & 0 & 0 \\
				0 & c_{\mathrm{y}1} &  0\\
				0 & 0 & c_{\mathrm{z}1} \\
				0 & 0 & 0 \\
			\end{array}
		\right], \\
		 \bm{C}_2 &= \left[
			\begin{array}{ccc}
				0 \!\!\!\! & gc_{\mathrm{x}2} \!\!\!\! & 0 \\
				-g c_{\mathrm{y}2} \!\!\!\! & 0 \!\!\!\! &  0\\
				0 \!\!\!\! & 0 \!\!\!\! & 0 \\
				0 \!\!\!\! & 0 \!\!\!\! & c_{\mathrm{\psi}0} \\
			\end{array}
		\right], 
		\bm{C}_3 = \left[
			\begin{array}{ccc}
				0 \!\!\!\! & gc_{\mathrm{x}3} \!\!\!\! & 0 \\
				-g c_{\mathrm{y}3} \!\!\!\! & 0 \!\!\!\! &  0\\
				0 \!\!\!\! & 0 \!\!\!\! & 0 \\
				0 \!\!\!\! & 0 \!\!\!\! & c_{\mathrm{\psi}1} \\
			\end{array}
		\right], \\
		\bm{D} &= \left[
			\begin{array}{cccc}
				0 & 0 & gc_{\mathrm{x}4} & 0 \\
				0 & -g c_{\mathrm{y}4} & 0 & 0 \\
				c_{\mathrm{z}2} & 0 & 0 & 0 \\
				0 & 0 & 0 & c_{\mathrm{\psi}2} \\
			\end{array}
		\right]
		\left[
			\begin{array}{c}
				\frac{\bm{1}_4^\top}{m} \\[5pt]
				\bm{B}_{\mathrm{\Phi}}  \\
			\end{array}
		\right] \in \mathbb{R}^{4 \times 4}, \\
		\bm{1}_4^\top &= [1,1,1,1] . \\
	\end{split}
\end{equation*}
By substituting eq. (\ref{eq:out2}) for eq. (\ref{eq:out0}), the condition for the system comprising eqs. (\ref{eq:error}) and (\ref{eq:out2}) to be SPR is an inequality.
The second and higher order derivatives of $x_\mathrm{e}$ and $y_\mathrm{e}$ in eq. (\ref{eq:out1}) are calculated from $\ddot{x}_{\mathrm{e}} = g \theta$ and $\ddot{y}_{\mathrm{e}} = -g \phi$ within eq. (\ref{eq:error}). 

In the following sections,
RFC and VG-ASSC are proposed to design $\bm{U}_{\rm f} = [U_{{\rm f}1}, U_{{\rm f}2}, U_{{\rm f}3}, U_{{\rm f}4}]^{\top}$ and $\bm{U}_{\rm s} = [U_{\rm s1}, U_{\rm s2}, U_{\rm s3}, U_{\rm s4}]^{\top}$, respectively.
As a result, the input $\bm{U}$ to the system with eqs. (\ref{eq:error}) and (\ref{eq:out2}) is the sum of these inputs:

\begin{equation}
	\bm{U} = \bm{U}_{\mathrm f} + \bm{U}_{\mathrm s} .
	\label{eq:U_sum}
\end{equation}

\subsection{Robust feedback controller (RFC)}\label{sec_4-1-5}
In this section, based on the shared fluctuation ranges, 
we design $\bm{U}_{\mathrm f}$ of RFC in eq. (\ref{eq:U_sum}) without communications among robots to ensure that the system satisfies eqs. (\ref{eq:error}) and (\ref{eq:out2}) SPR.

\begin{figure}[t]
	\centering
		\includegraphics[keepaspectratio, scale=0.4]{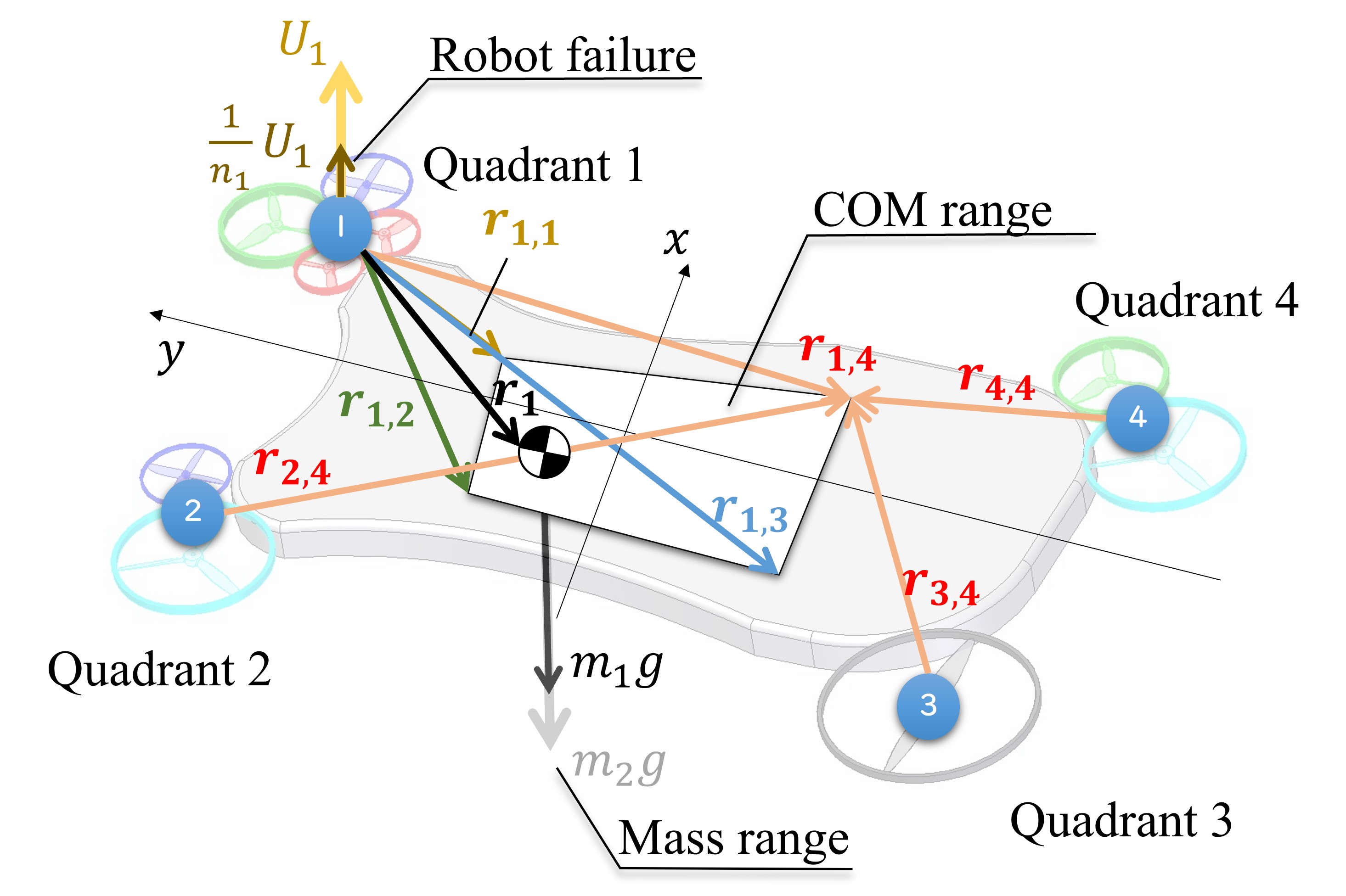}
	  \caption{Fluctuations}\label{fig:pory}
\end{figure}

\subsubsection{Uncertainty sharing}\label{sec_4-1-4}
In the RFC, the fluctuation ranges of mass, COM, and robot failure are shared, as shown in Fig. \ref{fig:pory}.
The fluctuations are treated as a change of $\bm{B}$ in eq. (\ref{eq:error}).
Furthermore, the thrust loss per robot failure in quadrant $i$ is assumed to be $\frac{1}{n_i} U_i $.
Therefore, the uncertainties associated with the mass, COM, and robot failures are defined as follows:
\begin{equation*}
	\begin{split}
		\frac{1}{m} &= \sum_{h=1}^2 \frac{\alpha_h}{m_h}, \\
		\bm{r_i} &= \sum_{l=1}^4 \beta_l \bm{r_{i,l}} , \\
		U_i &\rightarrow \left( \sum_{k_i = 1}^2 \gamma_{i,k_i} b_{i, k_i} \right) U_i ,
	\end{split}
\end{equation*}
where $\bm{r}_{i,l}$ is the vector from the representative point in quadrant $i$ to the vertex of the COM range, $m_h$ within the payload is assumed to be in $[m_1, m_2]$, $b_{i,1} = \frac{1}{n_i}$, and $b_{i,2} = 1$.
Moreover, $\alpha_h$, $\beta_l$, $\gamma_{1,k_1}$, $\gamma_{2,k_2}$, $\gamma_{3,k_3}$, and $\gamma_{4,k_4}$ are satisfied with 
\begin{equation*}
	\begin{split}
	\sum_{h=1}^2 \alpha_h &= \sum_{l=1}^4 \beta_l = \sum_{k_1 = 1}^2 \gamma_{1,k_1} = \sum_{k_2 = 1}^2 \gamma_{2,k_2} \\ 
	                      &= \sum_{k_3 = 1}^2 \gamma_{3,k_3} = \sum_{k_4 = 1}^2 \gamma_{4,k_4} = 1 .
	\end{split}
\end{equation*}
As a result, $\bm{B}$ with uncertainties due to the fluctuations are expressed as 
\begin{equation*}
	\begin{split}
		\bm{B} = \sum_{h=1}^2 \sum_{l=1}^4 & \sum_{k_1 = 1}^2 \sum_{k_2 = 1}^2 \sum_{k_3 = 1}^2 \sum_{k_4 = 1}^2 \\
		& \alpha_h \beta_l \gamma_{1,k_1} \gamma_{2,k_2} \gamma_{3,k_3} \gamma_{4,k_4}
		\bm{B}_{h, l, k_1, k_2, k_3, k_4},
	\end{split}
\end{equation*}
where
\begin{equation*}
	\begin{split}
		& \bm{B}_{h, l, k_1, k_2, k_3, k_4} = \left[
			\begin{array}{c}
				\bm{0}_{3 \times 4} \\
				\bm{B}_{\mathrm{X} h} \\
				\bm{0}_{3 \times 4} \\
				\bm{B}_{\mathrm{\Phi} l} \\
			\end{array}
		\right]
		{\rm diag} \left( b_{1,k_1}, b_{2, k_2}, b_{3, k_3}, b_{4, k_4} \right) , \\
		&\bm{B}_{\mathrm{X}h} = \frac{1}{m_h} \left[
			\begin{array}{cccc}
				\bm{e}_{3} & \bm{e}_{3} & \bm{e}_{3} & \bm{e}_{3} \\
			\end{array}
		\right] \in \mathbb{R}^{3 \times 4} ,\\
		&\bm{B}_{\mathrm{\Phi} l} = \bm{J}^{-1} \times\\
		&\left[
			\begin{array}{cccc}
				(\bm{r}_{1,l} \times \bm{e}_{3})_{12} \! \! & (\bm{r}_{2,l} \times \bm{e}_{3})_{12} \! \! & (\bm{r}_{3,l} \times \bm{e}_{3})_{12} \! \! & (\bm{r}_{4,l} \times \bm{e}_{3})_{12} \\
			d_1 c_{\mathrm{q}} \! \! & d_2 c_{\mathrm{q}} \! \! & d_3 c_{\mathrm{q}} \! \! & d_4 c_{\mathrm{q}} \\
			\end{array}
		\right] \: \: \in \mathbb{R}^{3 \times 4} .
	\end{split}
\end{equation*}
In this study, the effect of inertia fluctuations is assumed to be negligible.

\begin{figure}[t]
	\centering
		\includegraphics[keepaspectratio, scale=0.7]{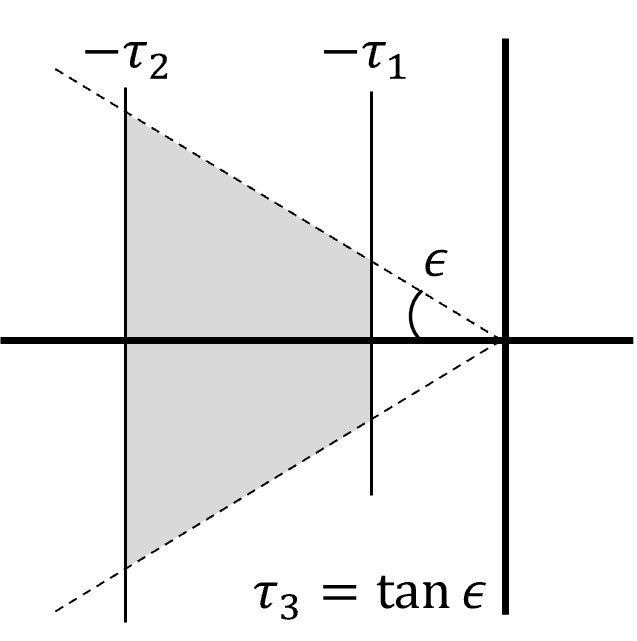}
	  \caption{Pole assignment}\label{fig:lmi_c}
\end{figure}

\subsubsection{Strictly positive realization}\label{sec_4-2-2-xx}
First, to increase the flexibility in the RFC design, $\bm{G} \in \mathbb{R}^{4 \times 4}$ is introduced for eq. (\ref{eq:out2}), and the new output equation is expressed as follows:
\begin{equation}
	\begin{split}
		\bm{\eta} &= \bm{G}\bm{\zeta} \\
	  & = \bm{G} \bm{C} \bm{\xi} + \bm{G} \bm{D} \left(\bm{U} - \bm{U}_{\rm r}\right) .
	\end{split}
\label{eq:out3}
\end{equation}
Substituting $\bm{U}_{\rm f} = -\bm{F} \bm{\xi}$ into eqs. (\ref{eq:error}), (\ref{eq:out2}), and (\ref{eq:out3}), the error system is described as
\begin{equation}
	\begin{split}
		\dot{\bm{\xi}} &= (\bm{A}- \bm{B} \bm{F}) \bm{\xi} + \bm{B} (\bm{U}_{\rm s} - \bm{U}_{\mathrm{r}}) ,\\
		\bm{\eta} &= \bm{G} (\bm{C} - \bm{D} \bm{F}) \bm{\xi} + \bm{G} \bm{D} (\bm{U}_{\rm s} - \bm{U}_{\mathrm{r}}).
	\end{split}
	\label{eq:st_fin}
\end{equation}

Second, 
for the aforementioned system to be SPR, it is sufficient that a $\bm{F}$, $\bm{G}$, and positive definite symmetric matrix $\bm{P}$ exist satisfying the following conditions:
\begin{equation}
	\begin{split}
	&\bm{P}(\bm{A}  -  \bm{B} \bm{F})  +  (\bm{A}  -  \bm{B} \bm{F})^\top \bm{P}  +  \\
	& \quad \left(\bm{P} \bm{B} -  \left(\bm{G} \left(\bm{C}-\bm{D}\bm{F}\right)\right)^\top \right)(\bm{G} \bm{D}  + (\bm{G} \bm{D})^\top)^{-  1} \times \\
	& \quad \quad \left(\bm{P} \bm{B}  -  \left(\bm{G} \left(\bm{C}-\bm{D}\bm{F}\right) \right)^\top \right)^\top  \prec  0,
	\end{split}
\label{eq:lmi1}
\end{equation}
\begin{equation}
	\bm{G} \bm{D}+(\bm{G} \bm{D})^\top \succ  0.
\label{eq:lmi1-1}
\end{equation}
However, the conditions (\ref{eq:lmi1}) and (\ref{eq:lmi1-1}) cannot be solved directly by LMI \cite{van2000}.
Therefore, the conditions (\ref{eq:lmi1}) and (\ref{eq:lmi1-1}) are converted into the following.
\begin{equation}
	\left[ \!
					\begin{array}{cc}
					 \bm{A} \bm{Q} \! + \! \bm{Q} \bm{A}^\top \! \! - \! \bm{B} \bm{R} \! -  \! \bm{R}^\top  \bm{B} & \bm{Q} \bm{C}^\top \! \! - \! \bm{B} \bm{S}^\top \! + \! \bm{R}^{\top} \bm{D}^{\top} \\
					 \bm{C} \bm{Q} \! - \! \bm{S}  \bm{B}^\top \! + \! \bm{D}\bm{R}   & -\bm{D} \bm{S}^\top \! \! - \! \bm{S} \bm{D}^\top \\
					\end{array}
		 \!\right] \prec 0,
		 \label{eq:lmi2}
\end{equation}
where $\bm{Q}(= \bm{P}^{-1}) \in \mathbb{R}^{12 \times 12}$, $\bm{R}(=\bm{F} \bm{Q}) \in \mathbb{R}^{4 \times 12}$, and
$\bm{S}(=\bm{G}^{-1}) \in \mathbb{R}^{4 \times 4}$.

Third, in this study, pole and gain constraints are considered in the RFC design.
Figure \ref{fig:lmi_c} shows the pole range to be assigned. 
The pole constraints are expressed as follows:  
\begin{equation}
	\bm{A} \bm{Q} \! + \! \bm{Q} \bm{A}^\top \! \! - \! \bm{B} \bm{R} \! -  \! \bm{R}^\top  \bm{B} + 2\tau_1 \bm{Q} \prec 0,
		 \label{eq:lmi_c1}
\end{equation}
\begin{equation}
	\bm{A} \bm{Q} \! + \! \bm{Q} \bm{A}^\top \! \! - \! \bm{B} \bm{R} \! -  \! \bm{R}^\top  \bm{B} + 2\tau_2 \bm{Q} \succ 0,
		 \label{eq:lmi_c2}
\end{equation}
\begin{equation}
	\left[ \!
					\begin{array}{cc}
					 \tau_3 \left(\bm{A} \bm{Q} + \bm{Q} \bm{A}^\top \right) & \bm{A} \bm{Q} - \bm{Q} \bm{A}^\top \\
					 \bm{Q} \bm{A}^\top - \bm{A} \bm{Q}   & \tau_3 \left(\bm{A} \bm{Q} + \bm{Q} \bm{A}^\top  \right) \\
					\end{array}
		 \!\right] \prec 0.
		 \label{eq:lmi_c3}
\end{equation}
To design the gain $\bm{F}$ satisfying eq.~(\ref{eq:lmi2}) to the smallest possible value and reduce the feasible solution space for improved tractability, the following gain constraint is considered
\begin{equation}
	\begin{split}
		& \min \kappa \\
		& {\rm s.t.} \\
		& \bm{Q} \succ \bm{I} \\
		& \left[ \!
					\begin{array}{cc}
					 \kappa^2 & \bm{R}^\top \\
					 \bm{R}   & \bm{I} \\
					\end{array}
		\!\right] \succ 0 .\\
	\end{split}
	\label{eq:lmi_c4}
\end{equation}
This utilizes Schur's lemma and is equivalent to the following constraint.
\begin{equation*}
	\| \bm{R} \|_2 < \kappa,
\end{equation*}
where $\| \cdot \|_2$ denotes the $\ell_2$ norm.

Finally, for decentralized control, to enable each robot to design its $\bm{U}_{\rm f}$, the $\bm{B}$ matrix used for the LMI extends to $\bm{B} \:{\rm diag}(n_1,n_2,n_3,n_4)$.
Then, the control input of the RFC of the $j$-th robot in quadrant $i$ is as follows:
\begin{equation}
	u_{\mathrm{f}i}^{j} = \bm{U}_{{\rm f}i} .
	\label{eq:uf_dec}
\end{equation}
%
\subsection{VG-ASSC for multiple output systems}\label{sec_4-2}
We designed $u_{{\rm s}i}^{j}$ in VG-ASSC.
From eqs. (\ref{eq:U_sum}) and (\ref{eq:uf_dec}), the control input of the $j$-th robot in quadrant $i$ is as follows:
\begin{equation*}
	u_i^j = u_{\mathrm{f}i}^{j} + u_{{\rm s}i}^j .
\end{equation*}
The complete structure of the proposed controller that design this control input is shown in Fig. \ref{fig:cont}.

To ensure the asymptotic stability of an error, the VG-ASSC \cite{Amano} for single output systems was adopted. 
However, to realize flight, multiple output errors must be controlled.  
In this study, each single-rotor robot uses one element of mixed error $\bm{\eta}$ as a feedback signal to its VG-ASSC.
Eq. (\ref{eq:out2}) includes the control input $\bm{U}$ in addition to the error $\bm{\xi}$.
Therefore, the VG-ASSC uses the approximated output $\bm{\eta}$. 



\begin{figure}[t]
	\centering
	\begin{minipage}[b]{0.55\linewidth}
		\centering
		\includegraphics[keepaspectratio, scale=0.4]{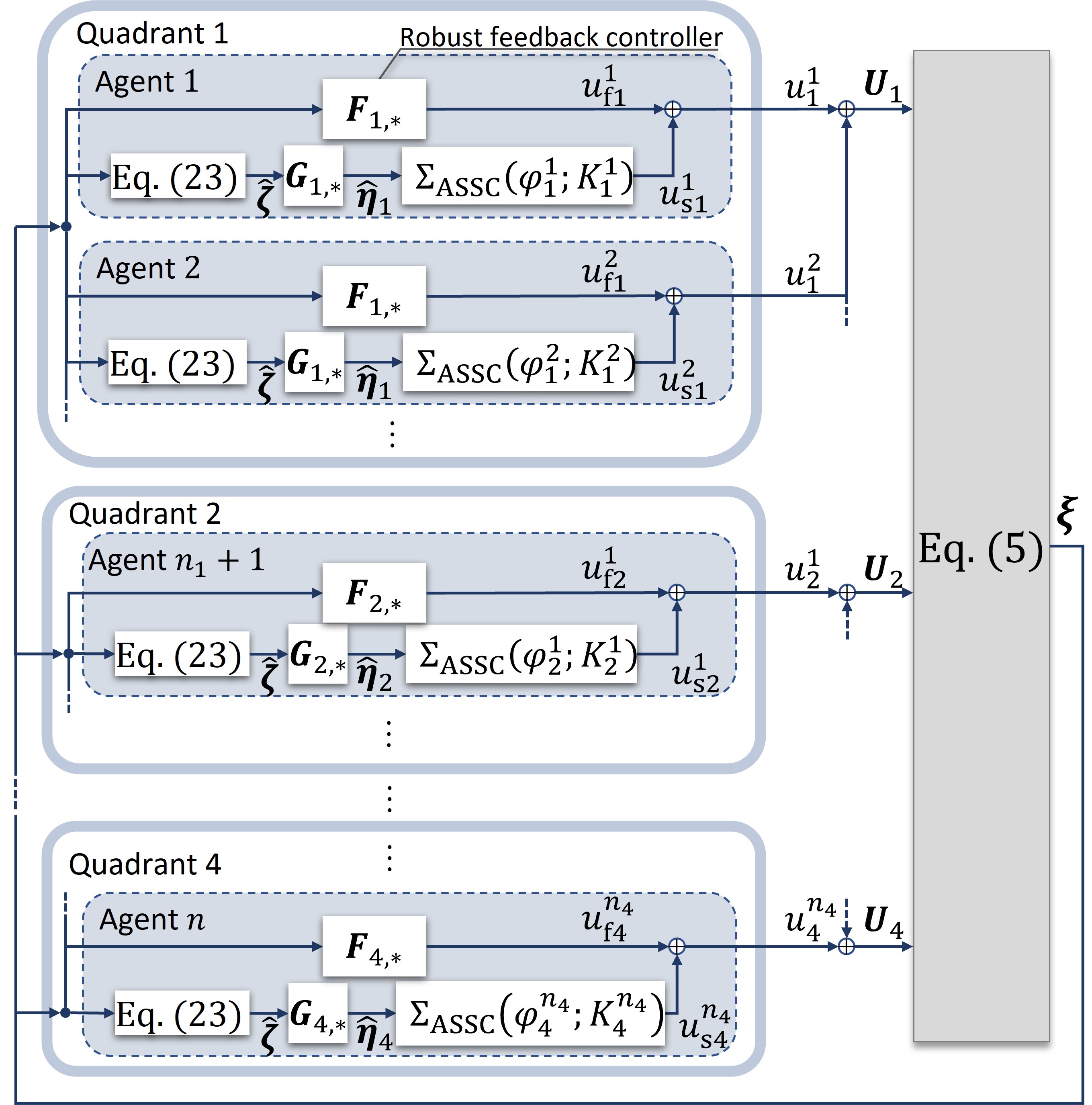}
		\caption{Proposed controller}\label{fig:cont}
	\end{minipage}
	\hfill
	\begin{minipage}[b]{0.35\linewidth}
		\centering
		\includegraphics[keepaspectratio, scale=0.45]{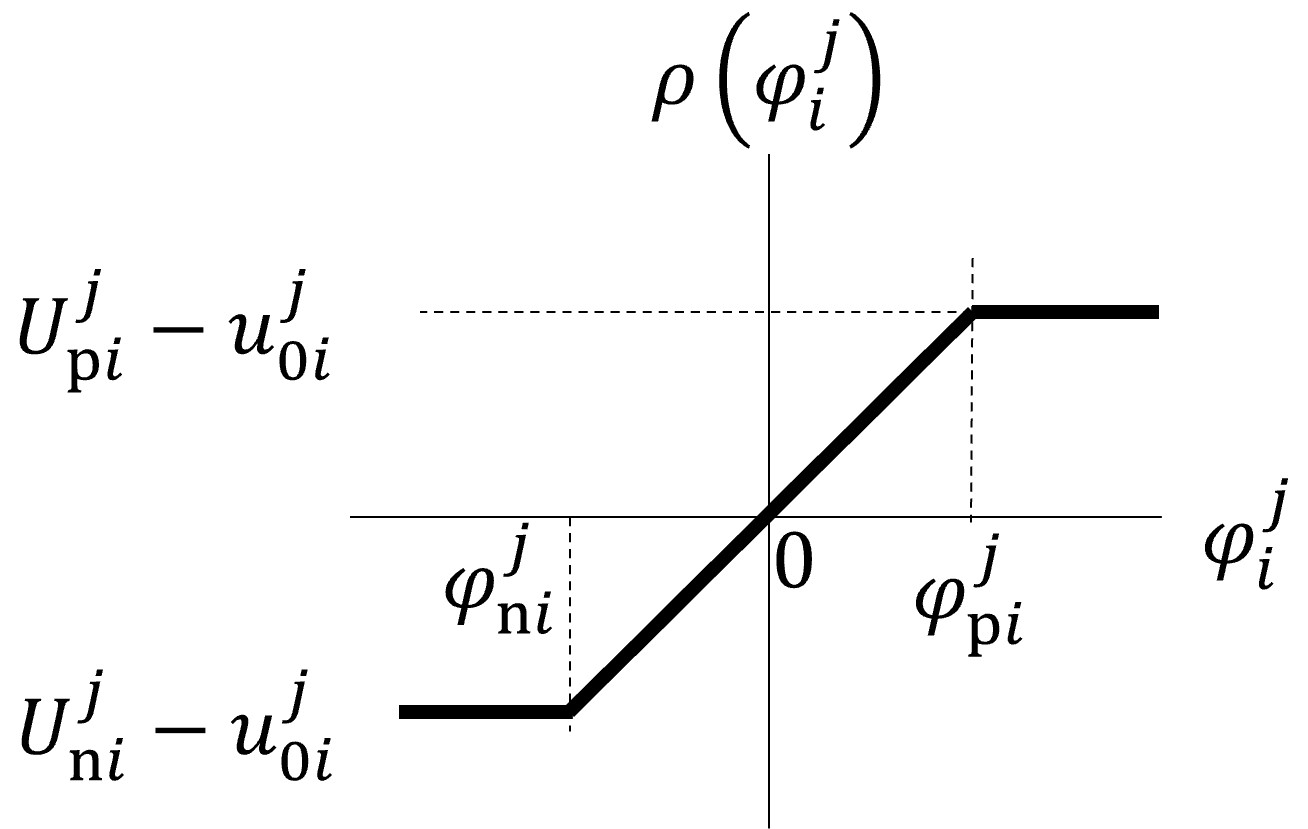}
		\caption{Switching function $\rho$ \cite{Oishi2021ICRA}}\label{fig:us}
	\end{minipage}
\end{figure}

\subsubsection{Asymptotically stabilization}\label{sec_4-2-2}
The VG-ASSC control is based on the $\bm{\eta}$ of eq. (\ref{eq:out3}). 
The $j$-th robot in quadrant $i$ uses the output $\eta_i^j (= \eta_i)$.
The VG-ASSC featuring the use of $\eta_i^j$ is as follows:
\begin{equation}
     \begin{split}
     \Sigma_{\rm{ASSC}}& \left \{
             \begin{array}{l}
     \dot{\varphi}_{i}^{j} = -K_{i}^{j}(\varphi_i^{j},\eta_i^j) \eta_i^j\\[3pt]
     u_{\mathrm{s}i}^j = \rho(\varphi_{i}^{j}) + u_{0i}^{j} 
             \end{array} \right. ,\\[5pt]
     \rho(\varphi_{i}^{j}) =
     & \left \{
     \begin{array}{l}
        U_{\mathrm{p}i}^{j} - u_{0i}^j : \varphi_{i}^{j} > \varphi_{\mathrm{p}i}^{j}  \\[4pt]
				\frac{U_{\mathrm{p}i}^j - U_{\mathrm{n}i}^j }{\varphi_{\mathrm{p}i}^{j}-\varphi_{\mathrm{n}i}^{j}}
				 \varphi_{i}^{j} : \varphi_{\mathrm{n}i}^j < \varphi_{i}^{j} \leq \varphi_{\mathrm{p}i}^{j}  \\[5pt]
     		U_{\mathrm{n}i}^{j} - u_{0i}^j: \varphi_{i}^{j} \leq \varphi_{\mathrm{n}i}^j
     \end{array} \right. ,
     \end{split}
     \label{eq:assc}
\end{equation}
where $K_{i}^j > 0 $ are the gains;
$U_{\mathrm{p}i}^j$ and $U_{\mathrm{n}i}^j$ are the upper and lower limits of the thrust applied by the robot, respectively;
$\rho$ is the switching function of $\varphi_{i}^{j}$, as shown in Fig. \ref{fig:us};
$\varphi_{\mathrm{p}i}^{j}$ is $\varphi_{i}^{j}$ with $(U_{\mathrm{p}i}^{j}-U_{\mathrm{n}i}^{j})/(\varphi_{\mathrm{p}i}^j - \varphi_{\mathrm{n}i}^j) \varphi_{i}^{j} = U_{\mathrm{p}i}^{j} - u_{0i}^j$;
$\varphi_{\mathrm{n}i}^{j}$ is $\varphi_{i}^{j}$ with $(U_{\mathrm{p}i}^{j}-U_{\mathrm{n}i}^{j})/(\varphi_{\mathrm{p}i}^j - \varphi_{\mathrm{n}i}^j) \varphi_{i}^{j} = U_{\mathrm{n}i}^{j} - u_{0i}^j$,
and $u_{0i}^j$ is the thrust for hovering.
Moreover, $K_i^j$ can be expressed as follows:
\begin{equation}
        \begin{split}
                 K_i^j(\varphi_i^j,\eta_i^j) =
                \left\{
                \begin{array}{l}
                \overline{k_i}^j : \varphi_{i}^j \eta_i^j > 0  \\[4pt]
                \underline{k_i}^j : \varphi_{i}^j \eta_i^j \le 0 
                \end{array} \right. \\
        \end{split}
				\label{eq:sw_gain}
\end{equation}
where $\overline{k_i}^j$ and $\underline{k_i}^j$ are positive values satisfying $0 < \underline{k_i}^j \leq \overline{k_i}^j$.
\begin{theorem}
	\label{thm:assc_a}
	Applying the controller of eq. (\ref{eq:assc}) to the error system comprising eq. (\ref{eq:st_fin}) shows that
	the error $\bm{\eta}$ satisfies $\bm{\eta} \rightarrow 0$ as $t \rightarrow \infty$.
\end{theorem}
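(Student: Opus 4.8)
The plan is to prove $\bm{\eta}\rightarrow 0$ by a passivity-based Lyapunov argument closed with Barbalat's lemma, viewing the closed loop as the feedback interconnection of the strictly-positive-real (SPR) error system and the bank of per-robot VG-ASSC controllers. First I would cash in the outcome of the RFC design: because conditions (\ref{eq:lmi1}) and (\ref{eq:lmi1-1}) render the system (\ref{eq:st_fin}) with input $\bm{v}:=\bm{U}_{\rm s}-\bm{U}_{\rm r}$ and output $\bm{\eta}$ SPR, the Kalman--Yakubovich--Popov lemma \cite{van2000} furnishes a positive definite symmetric $\bm{P}$ and a constant $\delta>0$ so that the storage $V_{\rm s}(\bm{\xi})=\frac{1}{2}\bm{\xi}^\top\bm{P}\bm{\xi}$ obeys the output-strict-passivity inequality $\dot V_{\rm s}\le \bm{\eta}^\top\bm{v}-\delta\|\bm{\eta}\|^2$. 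The strict feedthrough positivity (\ref{eq:lmi1-1}), namely $\bm{G}\bm{D}+(\bm{G}\bm{D})^\top\succ 0$, is precisely what secures the $-\delta\|\bm{\eta}\|^2$ margin in this relative-degree-zero case.

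Next I would build a storage for the controller bank. For the $j$-th robot in quadrant $i$, let $\varphi_{{\rm e}i}^j$ be the equilibrium offset defined by $\rho(\varphi_{{\rm e}i}^j)+u_{0i}^j=U_{{\rm r}i}^j$, where $U_{{\rm r}i}^j$ is a share of the reference with $\sum_j U_{{\rm r}i}^j=U_{{\rm r}i}$; such an offset exists because $\bm{U}_{\rm r}$ is assumed feasible within the thrust limits. Inside each gain mode I take $W_i^j(\varphi_i^j)=\frac{1}{K_i^j}\int_{\varphi_{{\rm e}i}^j}^{\varphi_i^j}\big(\rho(\sigma)+u_{0i}^j-U_{{\rm r}i}^j\big)\,d\sigma$, which is non-negative since $\rho$ is nondecreasing, and radially unbounded in $\varphi_i^j$ because $\rho$ saturates to $U_{{\rm p}i}^j>U_{{\rm r}i}^j$ and $U_{{\rm n}i}^j<U_{{\rm r}i}^j$. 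Using $\dot\varphi_i^j=-K_i^j\eta_i$ and $\eta_i^j=\eta_i$, a one-line computation gives $\dot W_i^j=-(\rho(\varphi_i^j)+u_{0i}^j-U_{{\rm r}i}^j)\eta_i$ within a mode, so that $\sum_{i,j}\dot W_i^j$ cancels the cross term $\bm{\eta}^\top\bm{v}=\sum_{i,j}(\rho(\varphi_i^j)+u_{0i}^j-U_{{\rm r}i}^j)\eta_i$ exactly. With $V=V_{\rm s}+\sum_{i,j}W_i^j$ this leaves $\dot V\le -\delta\|\bm{\eta}\|^2\le 0$ on every interval of fixed gain.

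The main obstacle is the variable gain $K_i^j$: since the factor $1/K_i^j$ in $W_i^j$ is mode-dependent, $V$ jumps whenever the rule (\ref{eq:sw_gain}) switches, and I must verify these jumps cannot accumulate to raise $V$. A switch occurs only where $\varphi_i^j\eta_i^j$ changes sign, i.e.\ at $\varphi_i^j=0$ or $\eta_i=0$. The crux is that the gain is large exactly when $\varphi_i^j$ is being driven toward equilibrium and small when it drifts away, so that on any excursion the magnitude of the stored integral is larger at the favorable (gain-increasing) switch than at the preceding unfavorable (gain-decreasing) one; the net jump over each cycle is therefore non-positive. This is precisely the balance engineered into the sign-based switching, and channel by channel it reduces to the single-output ASSC stability argument of \cite{Amano}. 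Because the controller states are decoupled across robots, each fed a single component of $\bm{\eta}$, the per-channel accounting sums directly to give the MIMO extension; the nonsmoothness of $\rho$ at its breakpoints (it is only Lipschitz) must likewise be handled in this bookkeeping.

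Finally I would close with Barbalat's lemma. A non-increasing $V$ bounded below converges, and integrating the within-mode bound yields $\int_0^\infty\|\bm{\eta}\|^2\,dt<\infty$, so $\bm{\eta}\in L_2$. Boundedness of $V$, together with radial unboundedness of $V_{\rm s}$ in $\bm{\xi}$ and of each $W_i^j$ in $\varphi_i^j$, bounds $\bm{\xi}$ and $\bm{\varphi}$ and hence all signals; since $\rho$ is Lipschitz and $\dot\varphi_i^j=-K_i^j\eta_i$ is bounded, $\dot{\bm{\eta}}$ is bounded, so $\bm{\eta}$ is uniformly continuous. Barbalat's lemma then delivers $\bm{\eta}\rightarrow 0$ as $t\rightarrow\infty$, which is the claim. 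I expect the delicate part to be the rigorous jump accounting for the switched, nonsmooth controller, for which the decoupled per-output structure and the cited single-output result are the right tools.
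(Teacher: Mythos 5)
Your overall architecture is the same as the paper's: the closed loop is treated as the negative-feedback interconnection of the SPR error system produced by the RFC and the bank of per-robot VG-ASSC blocks, the per-quadrant single-output structure is used to reduce the controller-passivity question to the single-output analysis of \cite{Amano}, and convergence of $\bm{\eta}$ follows from passivity of that interconnection. The genuine differences are in the glue: the paper invokes the hyperstability theorem directly (SPR forward path plus a passive feedback block implies $\bm{\eta}\rightarrow 0$) and takes the controller storage function ready-made from \cite[A.20]{Amano}, whereas you build an explicit Lur'e-type integral storage $W_i^j$ for each controller and close with KYP plus Barbalat. Your version is more self-contained at the ends but pushes all the difficulty into the middle.

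That middle step is where you have a real gap. Because your $W_i^j$ carries the mode-dependent factor $1/K_i^j$, it is not a common storage function for the switched system, and your claim that the jumps telescope to something non-positive over each switching cycle is asserted rather than proved. It is also not obviously true as stated: switches of eq.~(\ref{eq:sw_gain}) occur on the surfaces $\varphi_i^j=0$ or $\eta_i^j=0$, while your integral $\int_{\varphi_{{\rm e}i}^j}^{\varphi_i^j}(\rho(\sigma)+u_{0i}^j-U_{{\rm r}i}^j)\,d\sigma$ is centered at the equilibrium offset $\varphi_{{\rm e}i}^j$, which is nonzero whenever the reference thrust share differs from $u_{0i}^j$ (i.e.\ whenever $u_{\epsilon i}^j\neq 0$); the monotone-excursion argument you sketch compares the integral at consecutive switches relative to $\varphi=0$, not relative to $\varphi_{{\rm e}i}^j$, so the favorable jump need not dominate the unfavorable one. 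The paper avoids this entirely: the storage of \cite[A.20]{Amano} is constructed so that the gain switching appears only through the factor $K_i^j L_i^j-1$ in eq.~(\ref{eq:int1_2}) (shown non-positive in \cite{Amano}) and the reference offset appears through the $\tilde{u}_{\epsilon i}$ term, so there are no storage jumps to account for. To repair your proof you would either have to import that storage function anyway, or genuinely prove the cycle-wise jump bound with the correct centering. A secondary, more minor issue: the SPR conditions (\ref{eq:lmi1})--(\ref{eq:lmi1-1}) give you state-strict passivity, $\dot V_{\rm s}\le\bm{\eta}^\top\bm{v}-\epsilon\|\bm{\xi}\|^2$, not the output-strict bound $-\delta\|\bm{\eta}\|^2$ you wrote; you then get $\bm{\xi}\in L_2$ rather than $\bm{\eta}\in L_2$, and because $\bm{\eta}$ contains the feedthrough term $\bm{G}\bm{D}(\bm{U}_{\rm s}-\bm{U}_{\rm r})$ an extra argument is needed to conclude $\bm{\eta}\rightarrow 0$ from $\bm{\xi}\rightarrow 0$ --- this is precisely the work the hyperstability theorem does for the paper.
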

\begin{proof}
	See Appendix \ref{sec_proof1} for the proof.
\end{proof}
This controller adds multiple outputs and offsets $u_{0i}^j$ to the controller proposed by Amano et al. \cite{Amano}.
Multiple outputs and offsets are necessary to manage aerial systems.
In addition, an accurate value of $u_{0i}^j$ must be obtained to effectively use the switching gain as defined in eq.\eqref{eq:sw_gain}.
In this control approach, $u_{0i}^j$ is updated the accurate value after the equilibrium state.

High-response decentralized control can be expected by the proposed controller.
$\varphi_i^j$ is 0 at ideal equilibrium for $u_{0i}^j$.
$u_{\mathrm{s}i}^j$ applies thrust in the opposite direction of the integral of ${\eta}_i^j$.
Thus, eq. (\ref{eq:assc}) temporarily yields the control input in the opposite direction to the desired control input when $\varphi_{i}^j {\eta}_i^j$ is positive.
Therefore, the response can be improved by providing a considerable gain when $\varphi_{i}^j {\eta}_i^j$ is positive.

\subsubsection{Output approximation}\label{sec_4-2-1}
Each robot requires the control input $U$ of other robots because $\bm{\eta}$ in eq.~(\ref{eq:st_fin}) includes $U$. 
To achieve decentralized control, we approximate $U$ to exclude the control input from other robots.
To approximate $\bm{\eta}$, we focus on the accelerations of $z_{\mathrm{e}}$ and $\bm{\phi}_{\mathrm{e}}$: 
\begin{equation*}
	\left[
		\begin{array}{c}
			\ddot{z}_{\mathrm{e}} \\
			\ddot{\bm{\phi}}_{\mathrm{e}} \\
		\end{array}
	\right] =
	\left[
		\begin{array}{c}
			\frac{\bm{1}_4^\top}{m} \\[5pt]
			\bm{B}_{\mathrm{\Phi}}  \\
		\end{array}
	\right]
	\left( \bm{U} - \bm{U}_{\mathrm{r}} \right) .
\end{equation*}
By solving this equation for $\bm{U} - \bm{U}_{\mathrm{r}}$ and substituting it in eq. (\ref{eq:out2}), the approximated $\hat{\bm{\zeta}}$ is obtained as 
\begin{equation}
	\hat{\bm{\zeta}} = \bm{C} \bm{\xi} + \bm{\hat{D}}
	\left[
		\begin{array}{c}
			\ddot{z}_{\mathrm{e}} \\
			\ddot{\bm{\phi}}_{\mathrm{e}} \\
		\end{array}
	\right],
	\label{eq:out4_0}
\end{equation}
where $\hat{\bm{D}}$ is independent of the input matrix $\bm{B}$, as shown in the following equations: 
\begin{equation*}
	\bm{\hat{D}} = \bm{D}
	\left[
		\begin{array}{c}
			\frac{\bm{1}_4^\top}{m} \\[5pt]
			\bm{B}_{\mathrm{\Phi}}  \\
		\end{array}
	\right]^{-1} = 
	\left[
			\begin{array}{cccc}
				0 & 0 & g c_{\mathrm{x}4} & 0 \\
				0 & -g c_{\mathrm{y}4} & 0 & 0 \\
				c_{\mathrm{z}2} & 0 & 0 & 0 \\
				0 & 0 & 0 & c_{\mathrm{\psi}2} \\
			\end{array}
	\right].
\end{equation*}
In addition, $[\bm{1}_{4}/m, \bm{B}_{\mathrm{\Phi}}^T]^T$ is a non-singular matrix because of the non-zero $\bm{r}_i$.
From eq.\eqref{eq:out4_0}, the VG-ASSC uses the $i$-th element of the approximated output described by
\begin{equation}
	\begin{split}
	\hat{\bm{\eta}} &= \bm{G} \bm{\zeta} \\
	 &= \bm{G} \bm{C} \bm{\xi} + \bm{G} \bm{\hat{D}}
	\left[
		\begin{array}{c}
			\ddot{z}_{\mathrm{e}} \\
			\ddot{\bm{\phi}}_{\mathrm{e}} \\
		\end{array}
	\right] .
	\end{split}
	\label{eq:out4}
\end{equation}
%

\section{Numerical simulation}\label{sec_5}
As detailed in this section, the proposed and PID controllers are compared using numerical simulation.
We simulated the transportation of a rectangular-shaped payload using eight single-rotor robots and that of an L-shaped using ten robots.
In addition, the single-rotor robots use three types of robots with different maximum thrusts.
The proposed controller is designed to be robust against fluctuations in mass and COM and robot failures by utilizing the method described in Section \ref{sec_4-1-5}.

\begin{figure}[t]
	\centering
	\subfloat[Rectangle-shape]{
		\includegraphics[width=7cm]{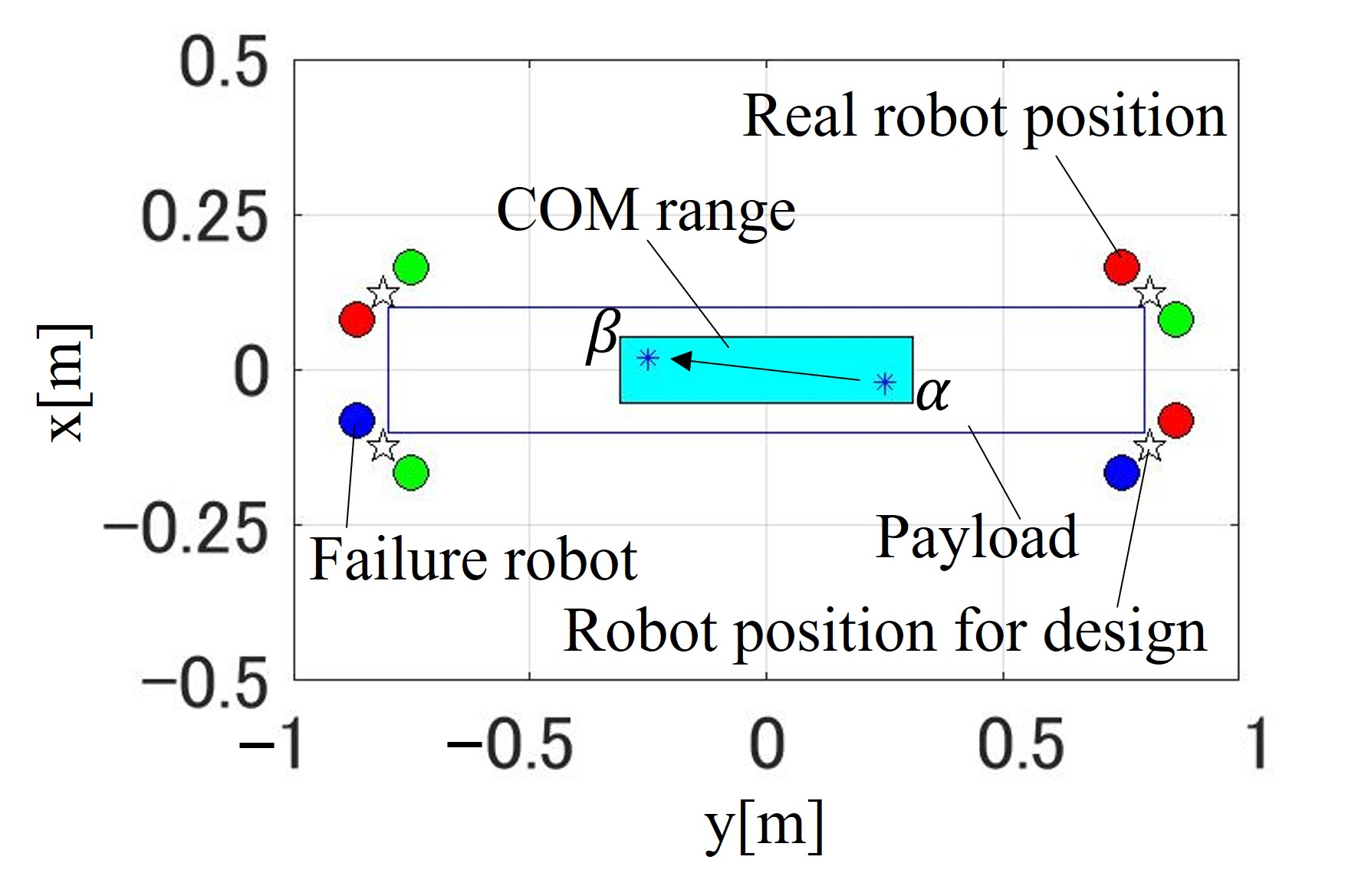}
		\label{fig:sim_c1}
	} 
	\subfloat[L-shape]{
		\includegraphics[width=7cm]{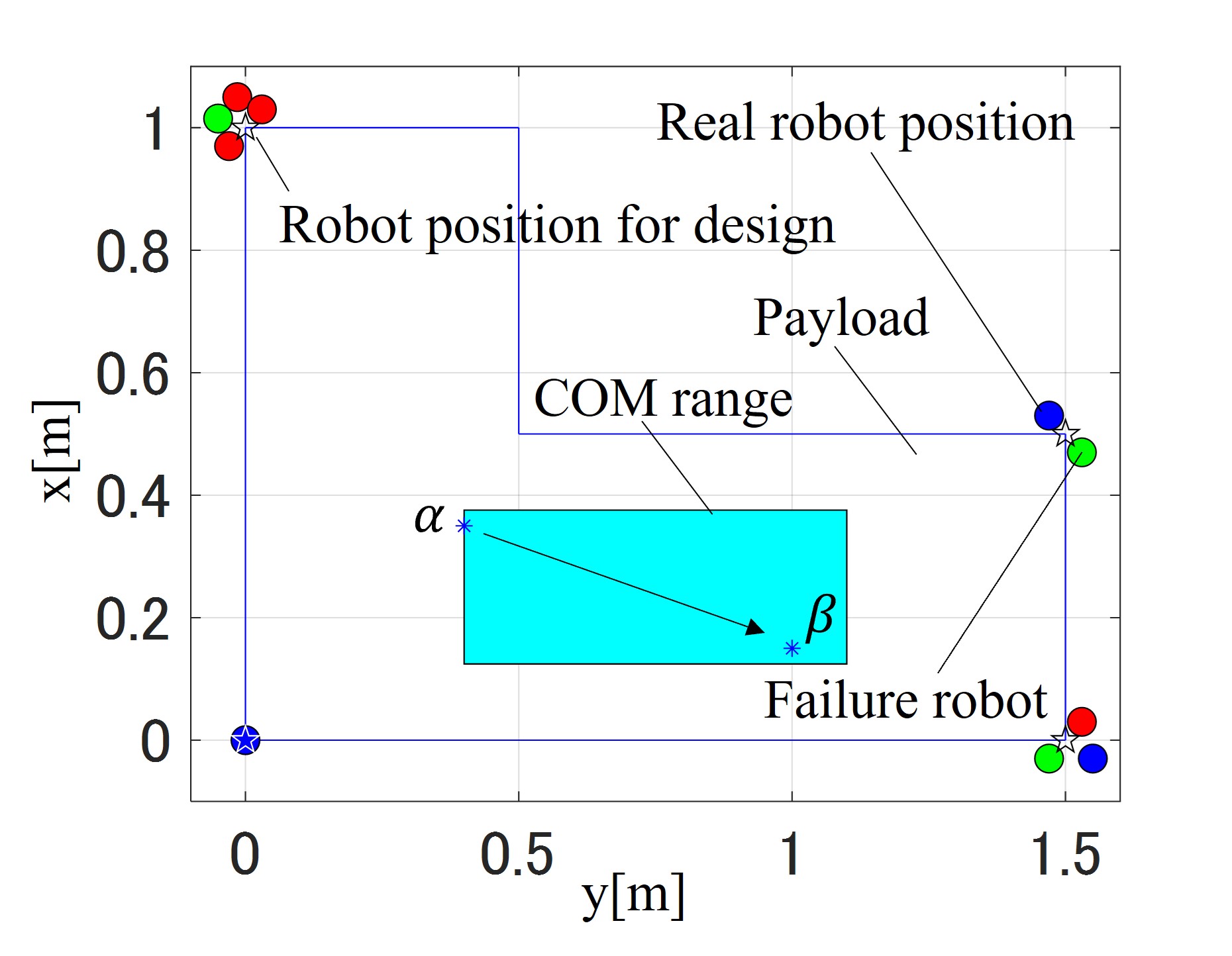}
		\label{fig:sim_c2}
	}
	\caption{Two different shapes of payloads used for the simulations. 
	The blue line indicates the shape of the payload, 
	filled circles indicate the actual robot positions, 
	red filled circles indicate Robot A,
	green filled circles indicate Robot B,
	blue filled circles indicate Robot C,
	stars indicate robot positions for the control design, 
	filled area indicates the COM range, 
	and asterisks indicate the simulated COM positions.}
	\label{fig:sim_c}
\end{figure}

\begin{table*}[t]
	\caption{Control parameters of simulation}
	\label{tb:sim_prm}
	\centering
	\begin{tabular}{c|cc}
			\hline
			Parameter & Rectangle-shape & L-shape\\ \hline \hline
			$n_1$, $n_2$, $n_3$, $n_4$  & $2$, $2$, $2$, $2$ & $4$, $2$, $3$, $1$ \\
			$m_1$,$m_2$ & $2.0$, $5.5$ kg & $2.0$, $5.5$ kg \\ 
			$c_{\mathrm{q}}$ & $0.162$ & $0.162$  \\
			$d_i$ & $ [-1,1-1,1]$ & $[-1,1,-1,1]$ \\
			$\bm{J}$ & $\rm{diag}\left(\left[0.419, 0.010, 0.429\right] \right)$ $\rm{kgm}^2$  & $\rm{diag}\left(\left[0.392, 0.142, 0.521\right]\right)$ $\rm{kgm}^2$  \\
			$[c_{\mathrm{x}0}, c_{\mathrm{x}1}, c_{\mathrm{x}2}, c_{\mathrm{x}3}, c_{\mathrm{x}4}]$ & $[1.00, 2.20, 1.82, 0.67, 0.09]$  & $[1.00, 2.00, 1.5, 0.5, 0.06]$ \\
			$[c_{\mathrm{y}0}, c_{\mathrm{y}1}, c_{\mathrm{y}2}, c_{\mathrm{y}3}, c_{\mathrm{y}4}]$ & $[1.00, 2.00, 1.5, 0.5, 0.06]$  & $[1.00, 2.00, 1.5, 0.5, 0.06]$ \\
			$[c_{\mathrm{z}0}, c_{\mathrm{z}1}, c_{\mathrm{z}2}]$ & $[1.00, 1.00, 0.25]$  & $[1.00, 1.00, 0.25]$ \\
			$[c_{\mathrm{\psi} 0}, c_{\mathrm{\psi} 1}, c_{\mathrm{\psi} 2}]$ & $[1.00, 1.00, 0.25]$ & $[1.00, 1.00, 0.25]$ \\
			$[\tau_1, \tau_2, \tau_3]$ & $[0.25, 140, 1.00]$ &  $[0.40, 60.0, 0.60]$ \\
			$\overline{k}_i^j$ & $49$ & $49$\\
			$\underline{k}_i^j$ & $7$ & $7$ \\
			\hline
	\end{tabular}
\end{table*}

\begin{figure}[t]
	\centering
		\includegraphics[keepaspectratio, scale=0.35]{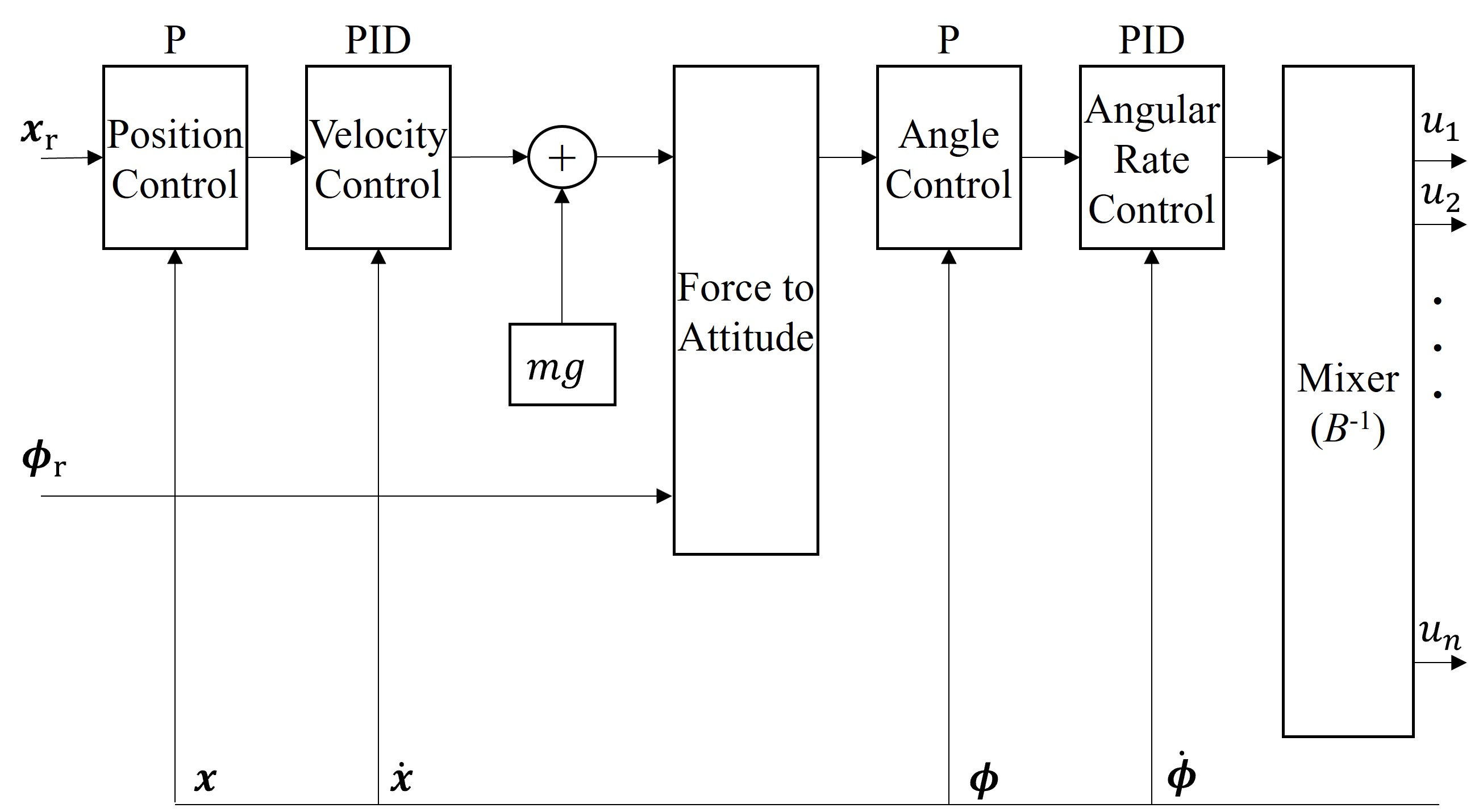}
	  \caption{Conventional PID controller}\label{fig:pid_arch}
\end{figure}

\begin{table}[t]
	\caption{Simulation conditions}
	\label{tb:sim_env}
	\centering
	\begin{tabular}{c|c}
			\hline
			Condition  & Value \\
			\hline \hline
			Initial position & $\left[ 0.0,0.0,2.0 \right] $ m \\ 
			Initial attitude  & $\left[ 0.0,0.0,0.0 \right] $ rad \\ 
			Target position & $\left[ 3.0,0.0,2.0 \right] $ m \\  
			Target attitude & $\left[ 0.0,0.0,0.0 \right] $ rad \\ 
			Time to give command & $30$ s \\ 
			Time to give COM fluctuations ($t_{\mathrm{c}}$) & $50$ s \\ 
			Time to give failure ($t_{\mathrm{f}}$) & $60$ s \\ 
			Initial mass & $3.0$ kg \\ 
			Mass after fluctuations & $4.0$ kg \\
			Motor time constant & $0.01$ s \\
			Time step & $10$ {\textmu}s \\
			\hline
	\end{tabular}
\end{table}
\subsection{Condition}\label{sec_5-1}
In this simulation, we used three robots, designated A, B, and C, with different maximum thrusts of $7$, $12$, and $15$ N, respectively.
A configuration of simulation targets is shown in Fig.\ref{fig:sim_c}.
The rectangular shape is similar to the prototype described in Section \ref{sec_6}.
The arrows in the figure indicate the movement of the COM during the simulation.
The parameters used for the controller design are presented in Table \ref{tb:sim_prm}.
Moreover, our controller requires an equilibrium thrust of $u_{0i}^j$. However, $u_{0i}^j$ is unknown at the time of the design.
Therefore, the initial values are given by equally distributing $3.5$ kg, which is the average of the mass design range, among the robots.
Thus, the robots use fixed gain instead of variable gain until they obtain the true value of $u_{0i}^j$.
The fixed gain at this time is $\underline{k}_i^j$.
In addition, the $\overline{k}_i^j$ and $\underline{k}_i^j$ of all robots are the same.
The PID control for comparison was the general cascade PID shown in Fig. \ref{fig:pid_arch} \cite{morin2019}.
The parameters of the PID controller were set such that the difference in response time from that of the proposed control would be $10$ \%
when the reference value given a step under the COM was the center and the mass was $3.5$ kg.

The simulation task involves moving from an initial position in mid-air to a target position given as a reference.
The proposed controller requires the thrust $u_{0i}^j$ of each robot during hovering as a control parameter.
Therefore, the thrust at 18 s is obtained as $u_{0i}^j$.
Subsequently, the target position is set at 30 s, and fluctuations in the COM and mass are set at 50 s ($t_{\mathrm{c}}$).
In addition, the robot failure is set at 60 s ($t_{\mathrm{f}}$).
Simulation parameters are presented in Table \ref{tb:sim_env}.
Furthermore, if the pitch or roll angle exceeds 90$^\circ$ or the altitude is less than zero, it is considered to have crashed.

The simulation is performed using a model in which the Coriolis force and a first-order approximation of the motor are included in eq. (\ref{eq:dy}).
Additionally, the time step is 10 {\textmu}s, which is a significantly shorter period than the system response time.

\begin{figure}[t]
	\centering
	\subfloat[PID controller]{
		\includegraphics[width=6.5cm]{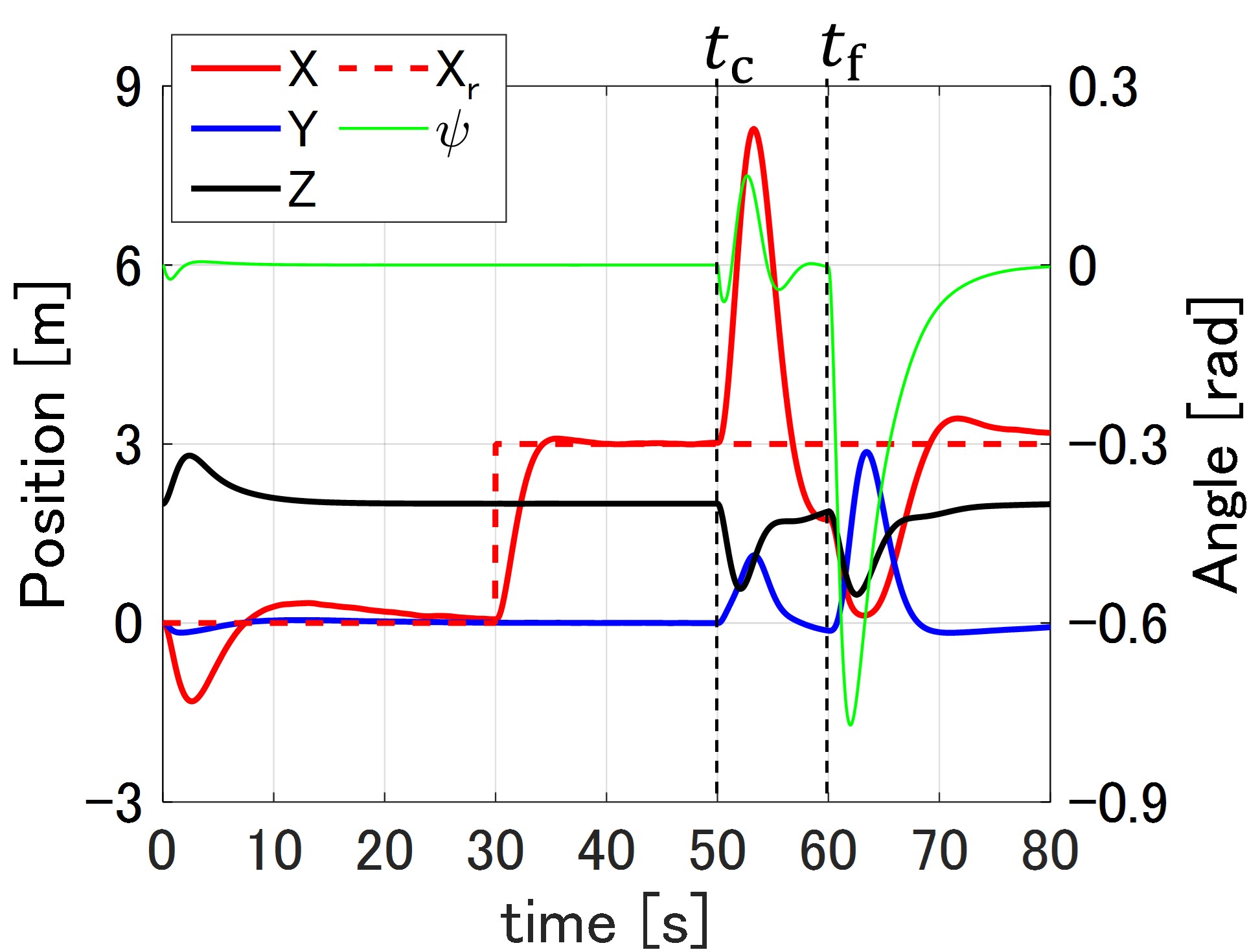}
		\label{fig:sim_rec2}
	} \hspace{5mm}
	\subfloat[Proposed controller]{
		\includegraphics[width=6.5cm]{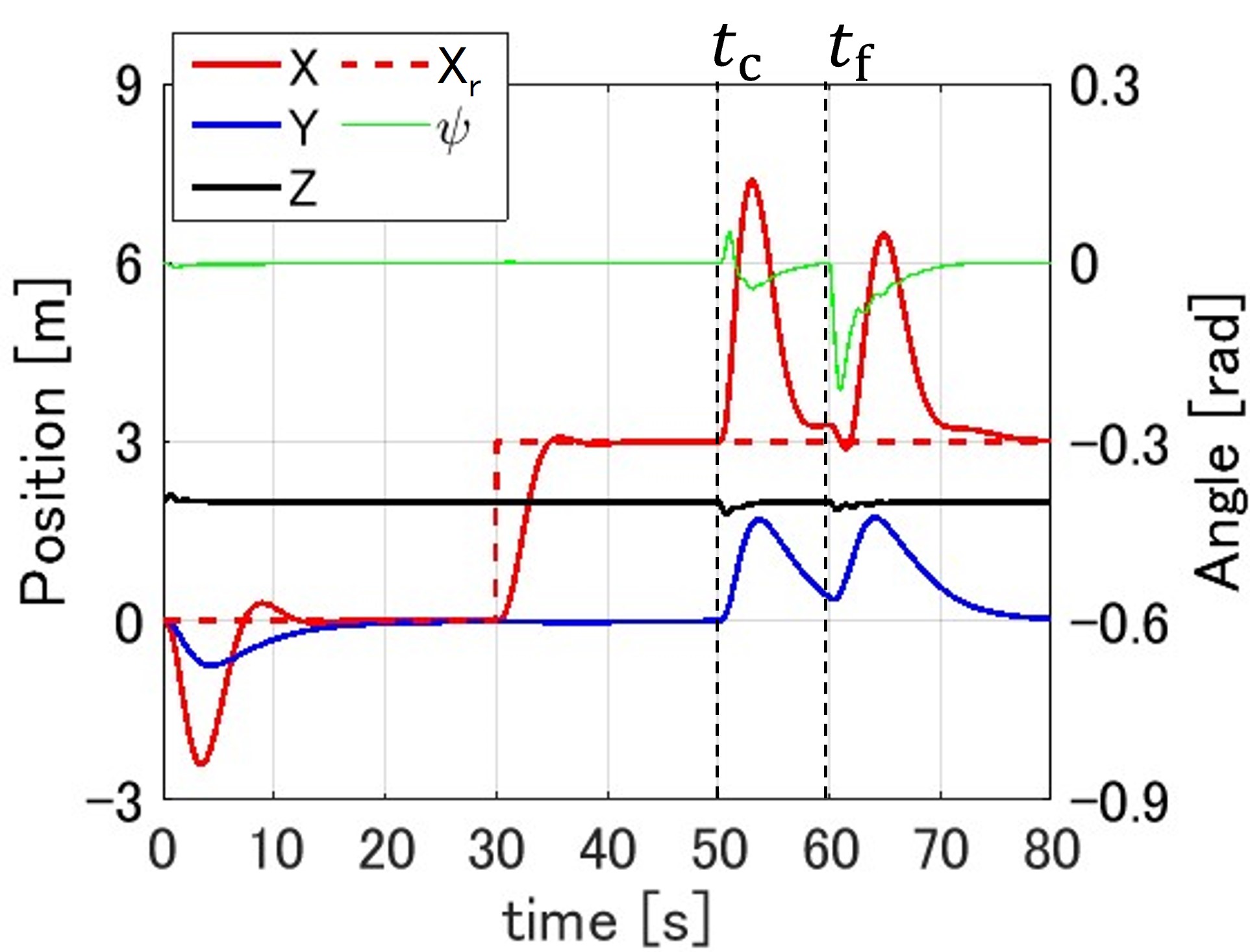}
		\label{fig:sim_rec1}
	}
	\caption{Time series of the position, yaw angle, and reference of the
	rectangular-shaped payload with fluctuations in mass and COM and robot failure using two controllers.
	$X_r$ denotes the reference of $X$.
	The vertical axes on the left and right indicate the positions and yaw angle, respectively.
	}
	\label{fig:sim_rec}
\end{figure}

\subsection{Result}\label{sec_5-2}
The simulation results with the rectangular- and L-shaped payloads are illustrated in Fig. \ref{fig:sim_rec} and Fig. \ref{fig:sim_L}, respectively.
$X$, $Y$, and $Z$ are the three-dimensional positions on the world frame transformed using the yaw angle $\psi$.
Compared with that of the PID controller, the overshoot of the proposed controller is almost identical but the disturbance-induced fluctuation is less pronounced for both the rectangular- and L-shaped payloads. 
In particular, the PID controller crashes at 52 s because of the altitude condition, whereas the proposed controller continues to fly in the case of the L-shaped payload.
Therefore, the proposed controller is expected to be adaptable to changes in geometry and the number of robots and can be robust against fluctuations.

\begin{figure}[t]
	\centering
	\subfloat[PID controller]{
		\includegraphics[width=6.5cm]{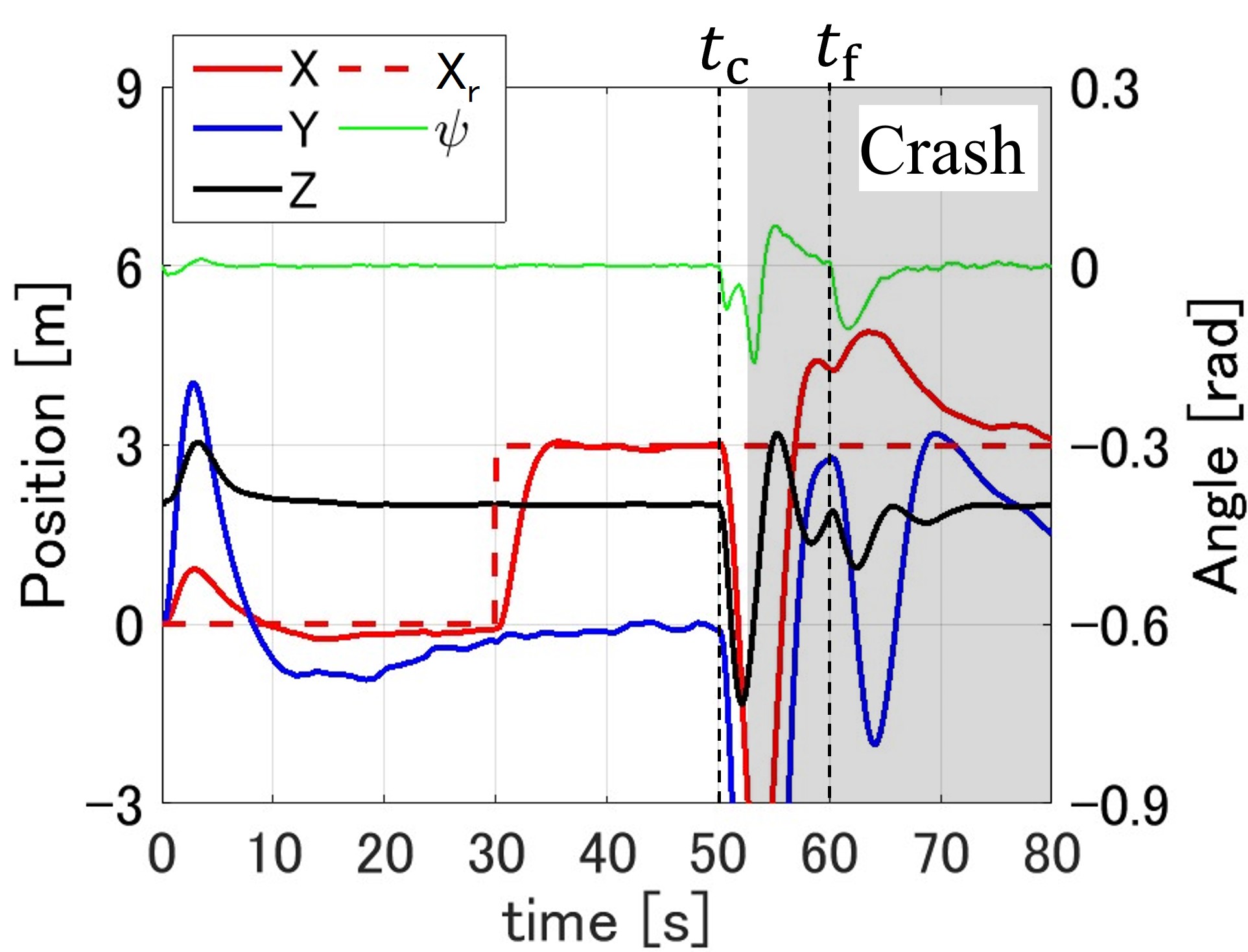}
		\label{fig:sim_L2}
  } \hspace{5mm}
	\subfloat[Proposed controller]{
		\includegraphics[width=6.5cm]{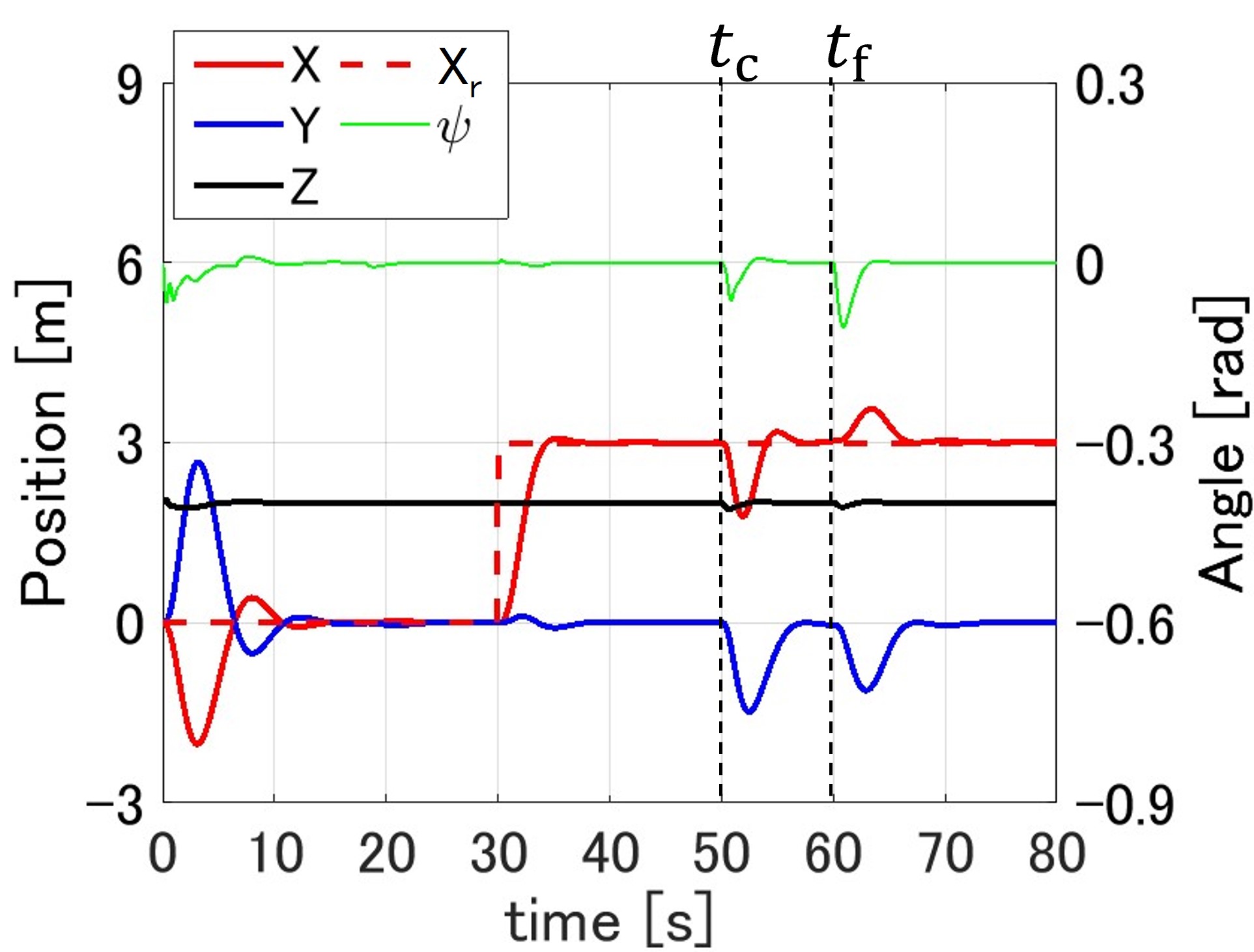}
		\label{fig:sim_L1}
	}
	\caption{Time series of the position, yaw angle, and reference of the
	L-shaped payload with fluctuations in mass and COM and robot failure using two controllers. 
	Each display is the same as that shown in Fig.\ref{fig:sim_rec}.}
	\label{fig:sim_L}
\end{figure}

\section{Real robot experiment}\label{sec_6}
This section presents the implementation of the proposed method on a prototype consisting of various types of single-rotor robots and verification of the feasibility of aerial transportation.
The prototype has a rectangular-shaped payload and eight single-rotor robots.
The experiment moves from takeoff to destination with fluctuations.

\subsection{Prototype}\label{sec_6-1}
The prototype comprises eight robots, a rectangular-shaped payload, and ball robots that simulate changes in COM, as shown in Fig.\ref{fig:proto}.
The eight robots comprise three types of robots: robot $\rm{A}'$, robot $\rm{B}'$, and robot $\rm{C}'$.
Each robot is distributed and controlled by an independent controller, as discussed in Section \ref{sec_4-2-2}.
However, the controller of the prototype is independent only within the software to facilitate production.
Therefore, regarding the hardware, each robot is controlled by a unique controller mounted on the payload.
The controller hardware is PIXHAWK \cite{pix2011}.
The software for the controller was coded using MATLAB\textsuperscript{\tiny\textregistered} Simulink\textsuperscript{\tiny\textregistered},
and Stateflow\textsuperscript{\tiny\textregistered} and was implemented using PX4 Autopilots Support from UAV Toolbox.
Each position, velocity, and yaw angle were acquired using a motion capture system, 
and other states were acquired using the existing estimator from the sensors in the controller \cite{pix2011}.
The control period was set to $5$ ms.
The prototype provided space for the ball robots to move for COM fluctuations on the top of the payload.
SPRK+ was used for the ball robots \cite{sphero}.
The range of COM change by the ball robots is $0.12$ m on the left and $0.03$ m on the right, as shown in Fig. \ref{fig:proto}.
The offset to the left is the effect of the mounted camera for photographing robot failure.
In this experiment, the ball robots were remotely controlled at a certain time to imitate fluctuation in COM.
In addition, failure was imitated by forcibly setting the thrust command of the single-rotor robot to $0$.
The specifications of the prototype are presented in Table \ref{tb:spec}.

\begin{figure}[t]
	\centering
		\includegraphics[keepaspectratio, scale=0.3]{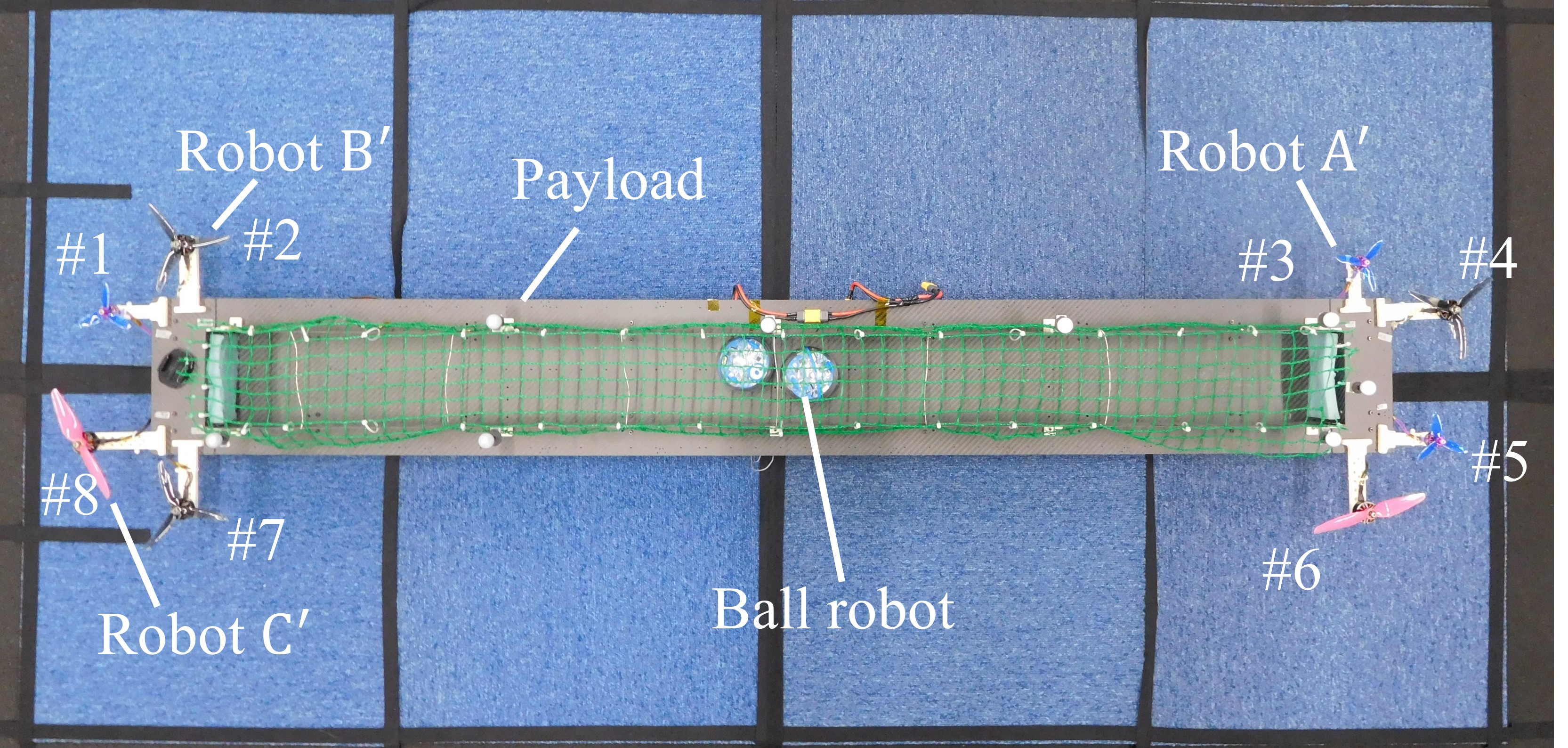}
	  \caption{Prototype}\label{fig:proto}
\end{figure}

\begin{table}[t]
	\caption{Prototype specifications}
	\label{tb:spec}
	\centering
	 \begin{tabular}{|cc|}
		\hline
		\multicolumn{2}{|c|}{Payload} \\
		\hline
	   Mass (with all robots)  & 2.2 kg \\
		 Size (without robots)  & $1.6$ $\times$ $0.2$ m \\
		 Mass of battery & $0.43$ kg \\
		 Inertia($[\bm{J_{xx}}, \bm{J_{yy}}, \bm{J_{zz}}]$) &  $\left[0.419, 0.010, 0.429\right]$ $\rm{kgm}^2$ \\
		 Mass of Ball robot & $0.18$ kg \\
		 Mass of camera & $0.18$ kg  \\
		 \hline \hline
		 \multicolumn{2}{|c|}{Robot $\rm{A}'$} \\
		 \hline 
		 Motor KV & $4200$ KV \\
		 Propeller & $3$ inch, $3$ blades \\
		 Max-thrust (catalog value) & $6.0$ N ($0.61$ kgf) \\
		 \hline \hline
		 \multicolumn{2}{|c|}{Robot $\rm{B}'$} \\
		 \hline
		 Motor KV & $2400$ KV \\
		 Propeller & $5$ inch, $3$ blades \\
		 Max-thrust (catalog value) & $14.2$ N ($1.45$ kgf) \\
		 \hline \hline
		 \multicolumn{2}{|c|}{Robot $\rm{C}'$} \\
		 \hline
		 Motor KV & $2500$ KV \\
		 Propeller & $6$ inch, $2$ blades \\
		 Max-thrust (catalog value) & $16.7$ N ($1.7$ kgf) \\
		\hline

	 \end{tabular}
\end{table}

\begin{figure}[t]
	\centering
		\includegraphics[keepaspectratio, width=7cm]{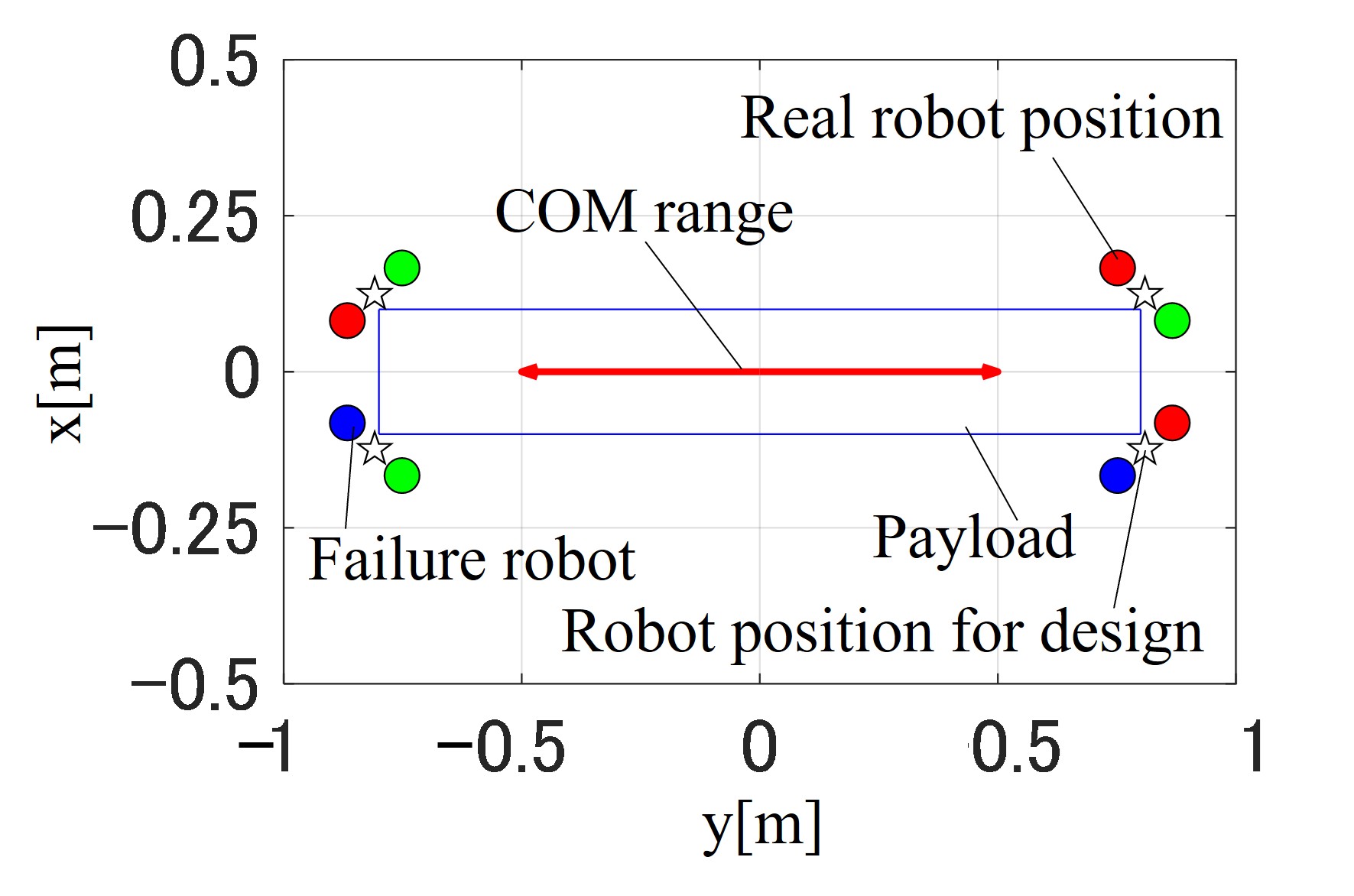}
	  \caption{Robot position and COM range on control design and failure robot.
		Filled circles indicate actual robot positions, 
		red filled circles indicate Robot A',
		green filled circles indicate Robot B',
		blue filled circles indicate Robot C', and
		stars indicate robot positions for the control design.}\label{fig:real_con}
\end{figure}

\begin{table}[t]
	\caption{Control parameters of real robot experiment}
	\label{tb:demo_c_prm}
	\centering
	\begin{tabular}{c|c}
		  \hline
			 Parameter & Prototype \\
			 \hline \hline
			$n$  & $8$ \\ 
			$m_1,m_2$ & $3.1$, $3.1$ kg \\  
			$c_{\mathrm{q}}$ & $0.162$  \\ 
			$d_i$ & $[-1,1-1,1]$ \\
			$\bm{J}$ & $ \rm{diag}\left(\left[0.419, 0.010, 0.429\right]\right)$ $\rm{kgm}^2$  \\ 
			$\left[c_{\mathrm{x}0}, c_{\mathrm{x}1}, c_{\mathrm{x}2}, c_{\mathrm{x}3}, c_{\mathrm{x}4}\right]$ & $\left[1, 4, 3, 1, 0.01 \right]$ \\ 
			$\left[c_{\mathrm{y}0}, c_{\mathrm{y}1}, c_{\mathrm{y}2}, c_{\mathrm{y}3}, c_{\mathrm{y}4}\right]$ & $\left[1, 3, 3, 0.2, 0.01 \right]$ \\ 
			$\left[c_{\mathrm{z}0}, c_{\mathrm{z}1}, c_{\mathrm{z}2}\right]$ & $\left[1, 0.8, 0.05\right]$ \\ 
			$\left[c_{\mathrm{\psi} 0}, c_{\mathrm{\psi} 1}, c_{\mathrm{\psi} 2}\right]$ & $\left[1, 7, 0.25\right]$ \\ 
			$\left[\tau_1, \tau_2, \tau_3\right]$ & $\left[0.1, 18, 2\right]$ \\ 
			$\overline{k}_i^j$ & $4.5$ \\ 
			$\underline{k}_i^j$ & $1.5$ \\
			\hline
	\end{tabular}
\end{table}

\begin{figure*}[t]
	\centering
	\subfloat[Trajectory]{
		\includegraphics[width=6cm]{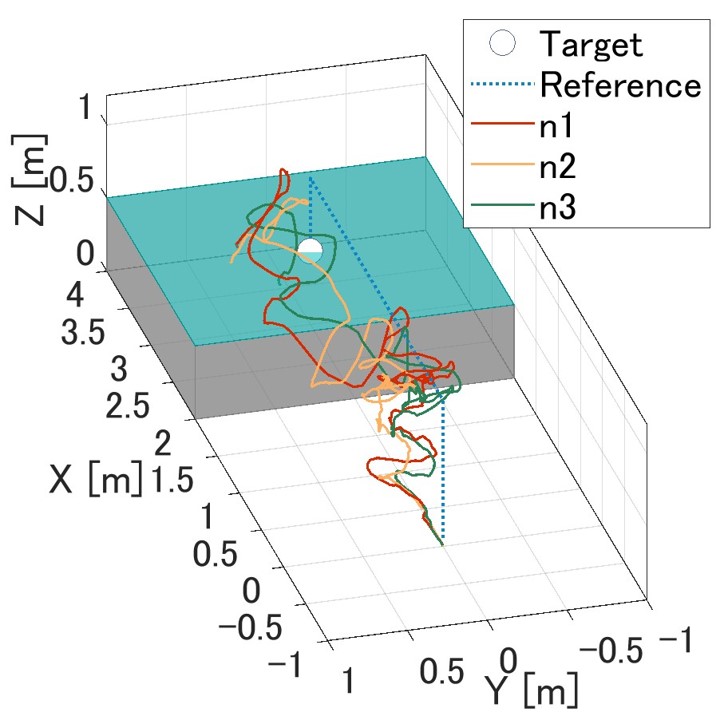}
		\label{fig:real_3d}
	} 
	\subfloat[Time series of position and yaw angle]{
		\includegraphics[width=9cm]{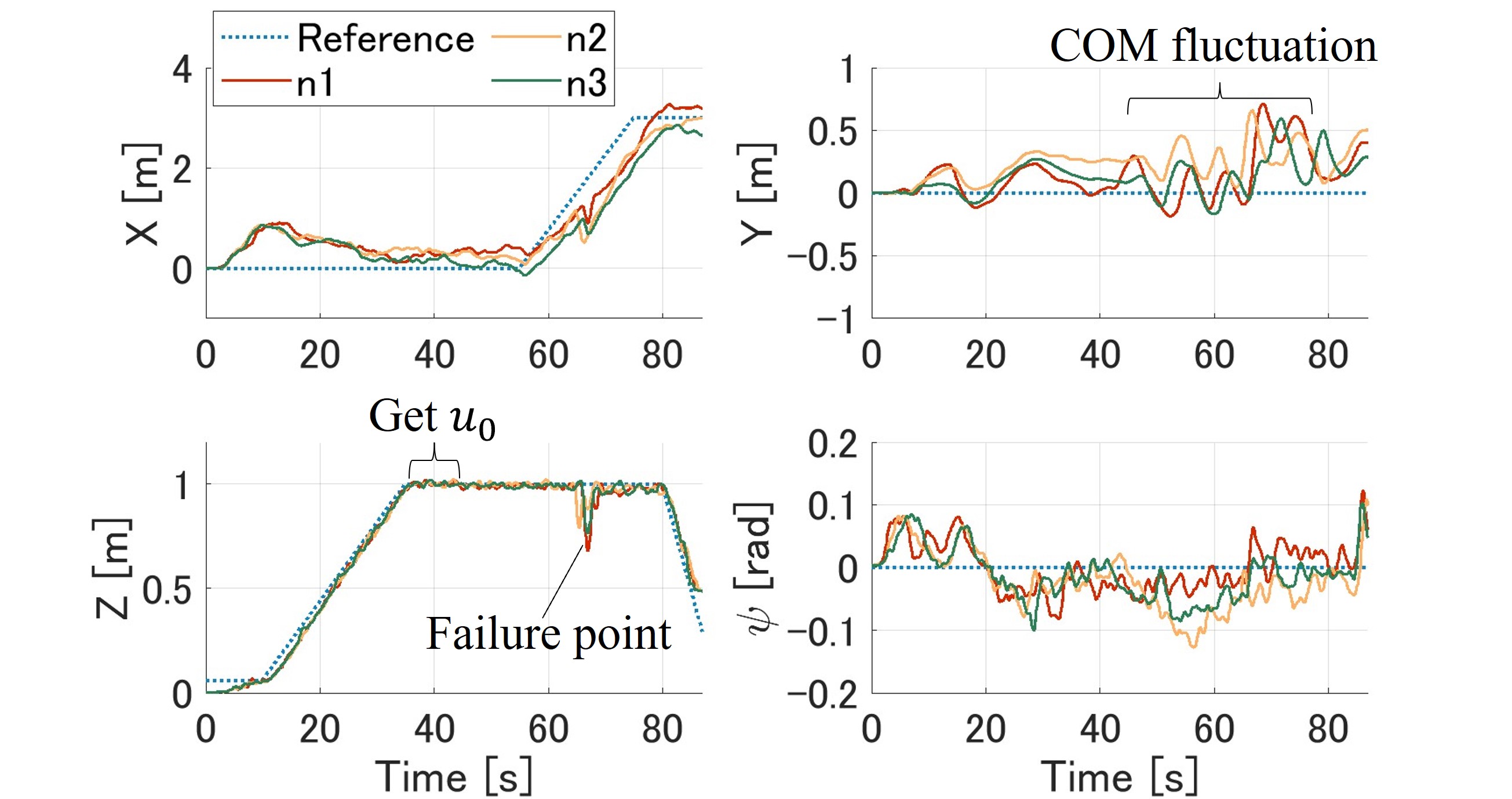}
		\label{fig:real_t}
	}
	\caption{Payload positions and yaw angle during experiments with fluctuations. 
	Dotted lines represent the reference trajectory, and solid lines represent the actual positions.}
	\label{fig:real}
\end{figure*}

\begin{figure*}[t]
	\centering
		\includegraphics[keepaspectratio, width=14.5cm]{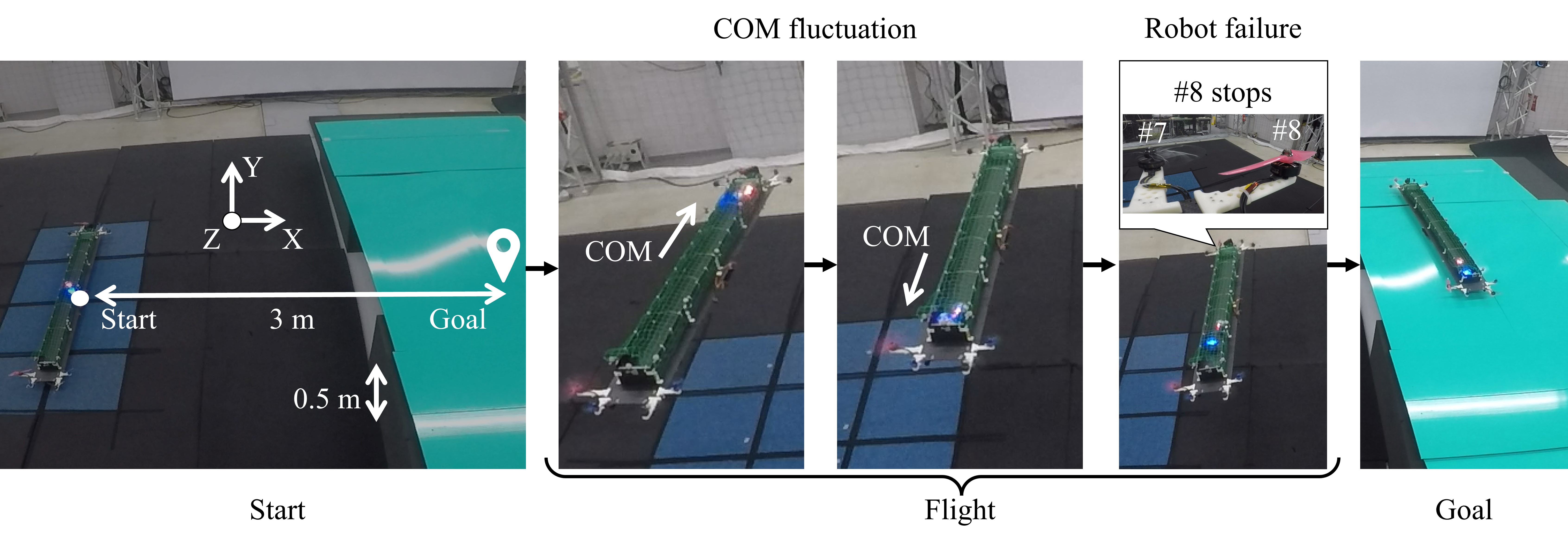}
	  \caption{Experiment using prototype}\label{fig:exp}
\end{figure*}

\begin{figure*}[t]
	\centering
	\subfloat[Trajectory]{
		\includegraphics[width=6cm]{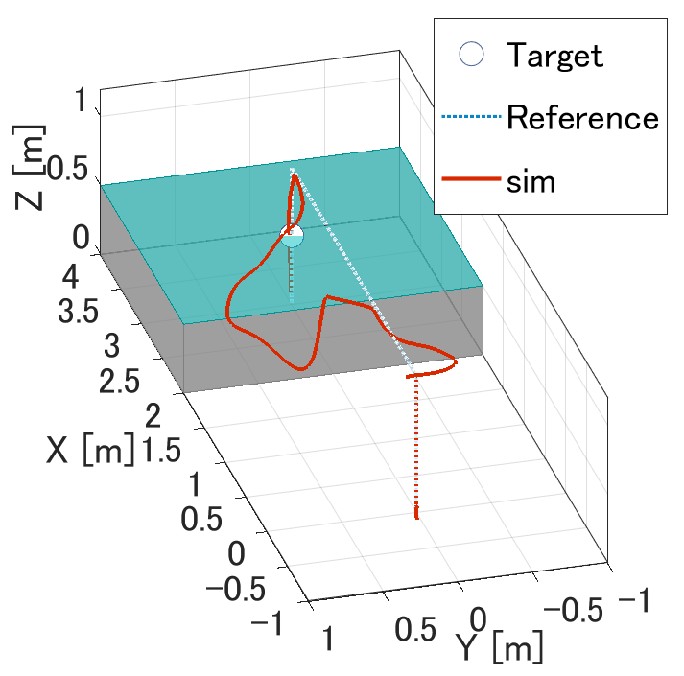}
		\label{fig:sim_3d}
	} 
	\subfloat[Time series of position and yaw angle]{
		\includegraphics[width=9cm]{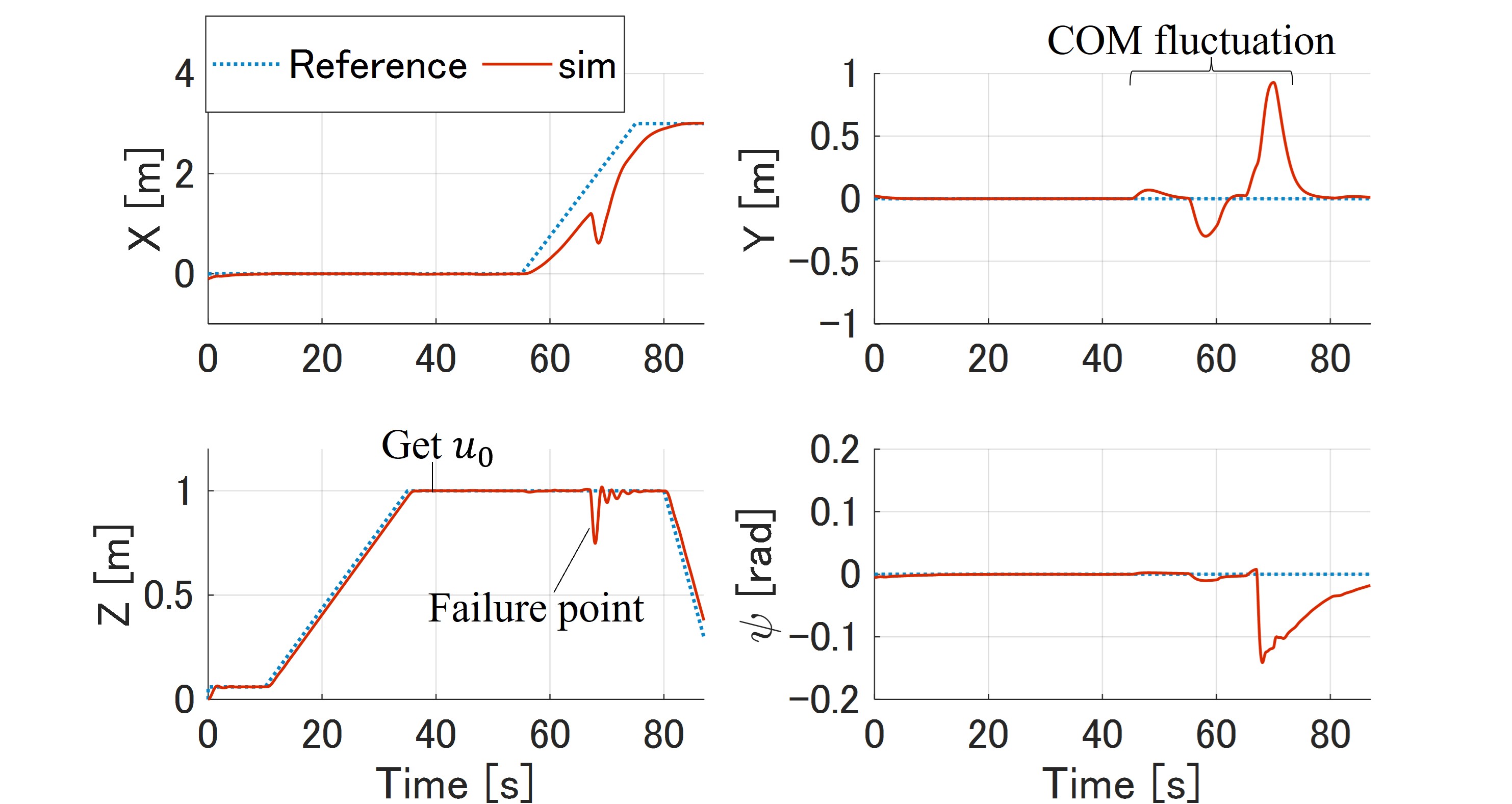}
		\label{fig:sim_t}
	}
	\caption{Payload positions and yaw angle during simulation under the same conditions as the experiments. }
	\label{fig:sim_under_real}
\end{figure*}

\subsection{Condition}\label{sec_6-2}
The real experimental task comprised the transportation to the target positions and disturbances.
The controller was designed to be robust against fluctuations in COM and robot failure because mass fluctuation can not easily be imitated in real experiments.
The fluctuation range of the COM for design was limited to the longitudinal direction because the COM of the prototype fluctuated in the longitudinal direction, as shown in Fig. \ref{fig:real_con}.
The representative point of robot positions for design was set to the midpoint of the line connecting robots in each quadrant.
The parameters in the control design are presented in Table \ref{tb:demo_c_prm}.
The initial values of $u_{0i}^j$ was set by dividing a total mass $3.1$ kg equally among the $n$ robots.
The true value of $u_{0i}^j$ was obtained during hovering.
Additionally, $X$, $Y$, and $Z$ are the three-dimensional positions on the world frame as in the simulation.

The transportation tasks in this experiment included takeoff, hovering in the air, and landing at the target position.
Specifically, the payload was lifted at an altitude of 1 m, then moved $3.0$ m in the $X$-direction, and landed on a $0.5$-m-high platform.
The reference value $\bm{x}_\mathrm{r}$ for performing this operation was continuously and automatically given.
Furthermore, the controller used $\underline{k}_i^j$ as a fixed gain, rather than a variable gain, until the true value of $u_{0i}^j$ was obtained during hovering.
To obtain $u_{0i}^j$, the thrust command of each robot with a 5.0-Hz low-pass filter was used.
In this experiment, for safety, the acquisition of $u_{0i}^j$ and robot failure was performed when a signal was remotely transmitted.
In addition, the movement of the ball robots following a change in COM was performed after obtaining $u_{0i}^j$.

\subsection{Result}\label{sec_6-3}
The results of the three experiments are shown in Fig. \ref{fig:real}.
In all three experiments, 
$u_{0i}^j$ was obtained at approximately 40 s, the COM was changed by the ball robots at 50 s when the command in the $X$-direction was given until it reached the goal, 
and robot failure occurred at approximately 67 s.
Fig. \ref{fig:exp} shows the flow of the third experiment.
Fig. \ref{fig:real_3d} and Fig \ref{fig:real_t} show that the proposed controller can perform aerial transportation even with heterogeneous robot configurations.
Moreover, the prototype could reach the vicinity of the goal without falling into a critical state when COM fluctuations and robot failure occurred.
Fig.~\ref{fig:sim_under_real} shows the simulation results under the same conditions as the experiment.
Regarding the control at height $Z$, the prototype exhibited a behavior similar to that observed in the simulation.
A discrepancy in the yaw angle $\psi$ between the simulation and the experiment was observed, probably due to the weak yaw torque of the multi-copter-type aerial vehicle, rendering it highly susceptible to external disturbances \cite{yaw}.
For the other values, although some fluctuations were observed, they were generally close to those in the simulation.

\section{Discussion}\label{sec_7}
Cooperative transportation by the proposed decentralized controller is possible even with heterogeneous robot configurations under the same constraints as those used in previous studies \cite{Oishi2021ICRA}.
In addition, the simulations discussed in Section \ref{sec_5} indicate that numerous robots, which were not mentioned in previous studies, can be controlled.
The constraints used in our approach are more alleviated than those in the approaches of \cite{KumarRigid2013} and \cite{pereira2018} in terms of the mass and shape of the payload.
The constraints on the robot position are assumed to be concentrated at representative points in each quadrant, as shown in Fig. \ref{fig:pory}.
These constraints are similar to the symmetry arrangement in the approach of \cite{wang2018}. 
These positional constraints are set because the controller requires specific positional information to guarantee stability.
An approach wherein the robot estimates its position with respect to the payload to alleviate restrictions on position information can be applied  \cite{Oishi2021IROS}.
The proposed method has a limitation in that it assumes the existence of a solution for the LMI of the RFC.
Depending on the uncertainty range in Section~\ref{sec_4-1-4}, no solution may exist.
Although deriving the conditions for the existence of the solution analytically is challenging, the feasibility conditions can be determined via numerical approaches \cite{LMIcond}.

For hardware, we use a single-rotor robot such as that employed in \cite{oung2011,mu2019universal}.
This study shows the possibility of using a modeled aerial robot similar to that employed in \cite{mu2019universal} to perform transportation tasks with distributed controllers and heterogeneous robots.
Thus, our research result further improves the advantages of modular configurations.
An aerial robot equipped with a single rotor rather than a multi-copter has a wider range of applications in assuming cooperative flight with multiple robots because a multi-copter is a collection of rotors equipped with a single rotor.
Connecting with a payload can be challenging for transportation.
The connection between single-rotor robots and a payload is screwed in the proposed prototype.
This connection should be simplified to render plug-in/plug-out practical.
Notably, methods applied as grasping mechanisms for multi-copters have been proposed in recent years \cite{meng2022aerial}.

During the experiment,
the proposed controller could reach the goal even when fluctuations occurred. However, an error was detected immediately after the start and near the goal for the position, as shown in Fig. \ref{fig:real}.
This error may be attributed to the ground effect, which occurs in the vicinity of the ground.
Fishman et al. \cite{fishman2021dynamic} attempted to address this ground effect using a deep learning method.
However, they did not succeed because the data obtained were influenced by issues such as contact near the ground.

\section{Conclusion}\label{sec_8}
In this study, we proposed a decentralized cooperative transportation system with heterogeneous single-rotor robots.
The proposed method extended plug-in/plug-out, the advantages of decentralized control, to heterogeneous robots for cooperative aerial transportation tasks.
Thus, even deteriorated robots could be reused.
In addition, the proposed controller using an RFC and VG-ASSC was robust against fluctuations while maintaining the performance of conventional controllers, and convergence was guaranteed.
Numerical simulation showed that the proposed system could be transported in the air under significant fluctuations even with different numbers of robots and different payload shapes.
In addition, 
aerial transportation was possible even if a failure and COM shift occurred in real robot experiments using the prototype.

As a future task, the restrictions of this controller must be alleviated.
The robot positions are restricted to the vicinity of the representative point to render the target system SPR.
This restriction is a major limitation as rotor blades are generally large.
In the future, we will aim to alleviate this restriction.

\bibliographystyle{tfnlm}
\bibliography{dist_hetero}

\appendix
\section{Proof of theorem\ref{thm:assc_a}}\label{sec_proof1}
The error system of eq.(\ref{eq:st_fin}) can be represented as follows:
\begin{equation}
	\begin{array}{l}
		\bm{\dot{\xi}} = \hat{\bm{A}} \bm{\xi} + \bm{B} \hat{\bm{U}} \\
		\bm{\eta} = \bm{G} \hat{\bm{C}} \bm{\xi} + \bm{G} \bm{D} \hat{\bm{U}}
	\end{array},
	\label{eq:sys2}
\end{equation}
where $\hat{\bm{A}} := \bm{A} - \bm{B} \bm{F}$, $\hat{\bm{C}} := \bm{C} - \bm{D} \bm{F}$ and $\hat{\bm{U}} := \bm{U}_{\rm s} - \bm{U}_{\mathrm{r}}$.
The hyperstability theorem \cite{anderson1968,landau1984} ensures that the asymptotic stability of the error $\bm{\eta}$ if eq.(\ref{eq:assc}) is passive.
This is because eq.(\ref{eq:sys2}) is SPR for the RFC.
Therefore, we show that eq. (\ref{eq:assc}) is passive.

From eq.~\eqref{eq:assc}, the sum of outputs of controllers in quadrant $i$ is given as
\begin{equation}
		\bm{v}_i := -U_{\mathrm{s}i} + U_{\mathrm{r}i} = \sum_{j=1}^{n_i} v_i^j,
		\label{eq:out_v}
\end{equation}
where
\begin{equation*}
	v_i^j := - u_{{\mathrm s}i}^j + u_{{\mathrm r}i}^j .\\
\end{equation*}
The entire system can be represented as shown in Fig.~\ref{fig:system_p} using eqs.\eqref{eq:assc}, \eqref{eq:sys2}, and \eqref{eq:out_v}.
Although this system has multiple outputs, one quadrant is a single output.
Thus, the methodology used by \cite{Amano} can be employed to prove the passivity of eq.~\eqref{eq:assc} for each quadrant.
In addition, $\bm{v}_{i}$ can be expressed as
\begin{align}
	\begin{split}
		\hspace{-10mm} \bm{v}_{i} &= \sum_{j=1}^{n_i}\left( - u_{{\mathrm s}i}^j + u_{{\mathrm r}i}^j \right) = \sum_{j=1}^{n_i}\left( - \rho ({\varphi_{i}^j}) - u_{0i}^j + u_{0i}^j + u_{{\epsilon}i}^j \right)\\
			& = \sum_{j=1}^{n_i}\left( - \rho ({\varphi_{i}^j}) + u_{{\epsilon}i}^j \right) ,
	\label{eq:vi}
	\end{split}
\end{align}
%
\begin{figure}[t]
	\centering
		\includegraphics[keepaspectratio, scale=0.33]{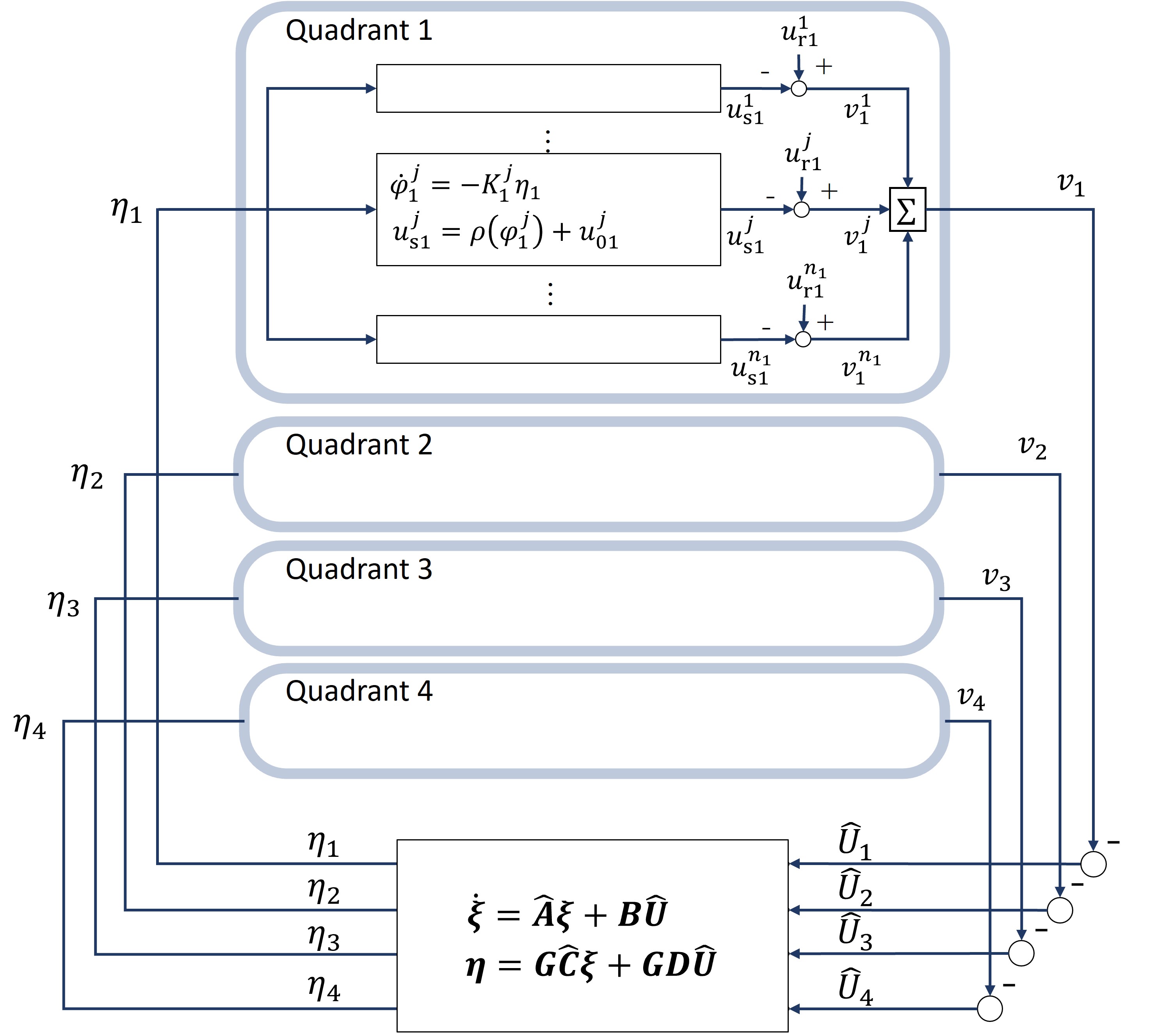}
	  \caption{System}\label{fig:system_p}
\end{figure}
where $u_{{\epsilon}i}^j$ represents the thrust required to move the reference position.
$v_i$ of eq.~\eqref{eq:vi} has the same form as that in \cite{Amano}.
For every agent $j$ in quadrant $i$, there exists a storage function $V_{{\mathrm c}i}^j$ that suffices the following condition\cite[A.20]{Amano}.
\begin{equation}
	\dot{V}_{\mathrm{c}i}^j = v_i^j \eta^i_j + \left(K_i^j(\varphi_i^j,\eta_i^j) L_i^j(\varphi_i^j) -1\right) \left(\rho(\varphi_i^j) - \tilde{u}_{\epsilon i}(\varphi_i^j) \right)\eta_i^j + \left(u_{\epsilon i}^j - \tilde{u}_{\epsilon i}(\varphi_i^j)\right) \eta_i^j ,
	\label{eq:int1_2}
\end{equation}
where $L_i^j(\varphi_i^j)$ and $\tilde{u}_{\epsilon i}^j (\varphi_i^j)$ satisfy the following conditions.
\begin{equation*}
	\begin{split}
		&\underline{k_i}^j \le L_i^j(\varphi_i^j) \le \overline{k_i}^j ,\\
		\tilde{u}_{\epsilon i}^j(\varphi_i^j) = & \begin{cases}
			\min\left(\rho(\varphi_i^j) , u_{\epsilon i}^j\right) & \text{if} \quad u_{\epsilon i}^j \ge 0 \\
			\max\left(\rho(\varphi_i^j) , u_{\epsilon i}^j\right) & \text{if} \quad u_{\epsilon i}^j < 0 
		\end{cases}.
	\end{split}
\end{equation*}
According to \cite{Amano}, the second term in eq.~\eqref{eq:int1_2} is non-positive.
Furthermore, when $\overline{k_i}^j$ is sufficiently large, $\varphi_i^j \eta_i^j \le 0$ holds, rendering the third term approximately non-positve.
Hence, the following relation holds
\begin{equation}
	\dot{V}_{\mathrm{c}i}^j \le v_i^j \eta_i^j .
	\label{eq:int1_3}
\end{equation}
From eq.~\eqref{eq:int1_3}, the storage functions $V_{\mathrm{c}i}$ for each quadrant $i$ satisfy the following conditions:
\begin{equation}
	\dot{V}_{\mathrm{c}i} = \sum_{j=1}^{n_i} \dot{V}_{\mathrm{c}i}^j \le \sum_{j=1}^{n_i} v_i^j \eta_i^j.
	\label{eq:int1_4}
\end{equation}
Therefore, the overall storage function $V_{\mathrm{c}}$ satisfies
\begin{equation}
	\dot{V}_{\mathrm{c}} \le \bm{v}^{\top}\bm{\eta} .
\end{equation}
Thus, according to \cite{van2000}, eq. (\ref{eq:assc}) is passive.

\begin{table*}[p]
	\caption{Rising time. The unit is seconds.}
	\label{table:f}
	\centering
	\begin{tabular}{|c|cccc|cccc|}
		\hline
		Shape & \multicolumn{4}{|c|}{Rectangle-shape} &  \multicolumn{4}{|c|}{L-shaped}\\ \hline
		State & $X$ & $Y$ & $Z$ & $\Psi$ & $X$ & $Y$ & $Z$ & $\Psi$ \\ \hline
		Our controller & $2.85$ & $2.96$ & $2.27$ & $2.18$ & $2.56$ & $2.64$ & $2.18$ & $2.17$ \\
		PID controller & $2.76$ & $2.99$ & $2.13$ & $2.21$ & $2.55$ & $2.73$ & $2.06$ & $2.11$ \\
		\hline
\end{tabular}
\end{table*}

\begin{figure*}[p]
	\centering
	\subfloat[Rectangle-shape $X$]{
		\includegraphics[width=4.6cm]{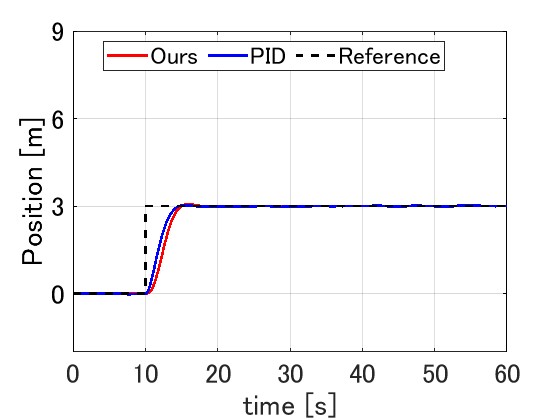}
		\label{fig:step1}
	}
	\subfloat[Rectangle-shape $Y$]{
		\includegraphics[width=4.6cm]{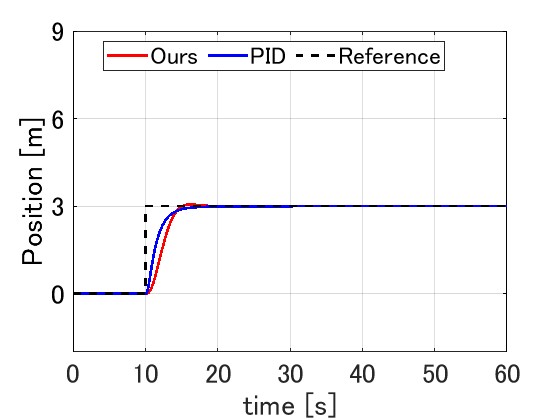}
		\label{fig:step2}
	} \\
	\subfloat[Rectangle-shape $Z$]{
		\includegraphics[width=4.6cm]{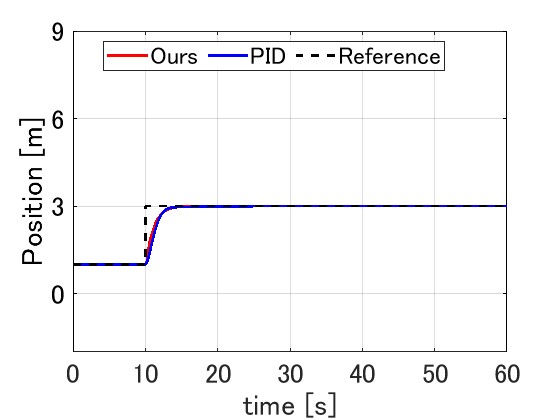}
		\label{fig:step3}
	}
	\subfloat[Rectangle-shape $\psi$]{
		\includegraphics[width=4.6cm]{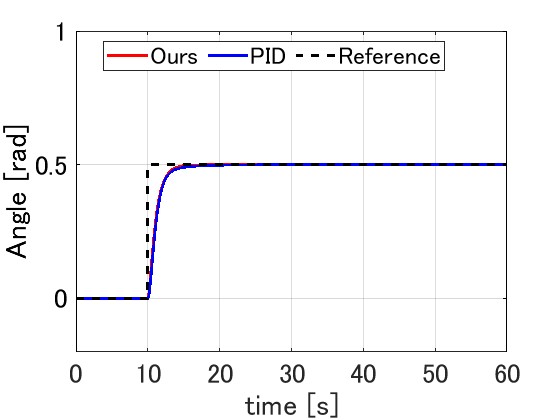}
		\label{fig:step4}
	} \\
	\subfloat[L-shape $X$]{
		\includegraphics[width=4.6cm]{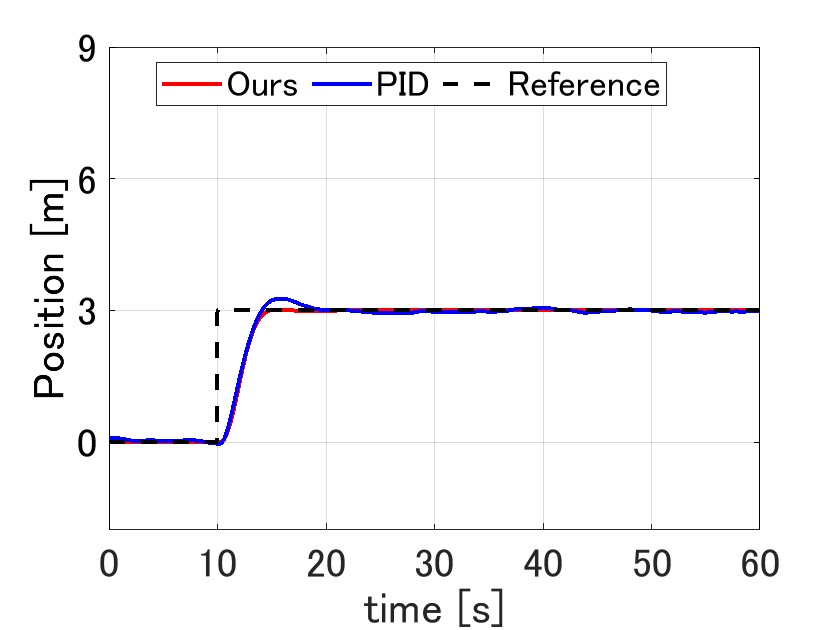}
		\label{fig:step5}
	}
	\subfloat[L-shape $Y$]{
		\includegraphics[width=4.6cm]{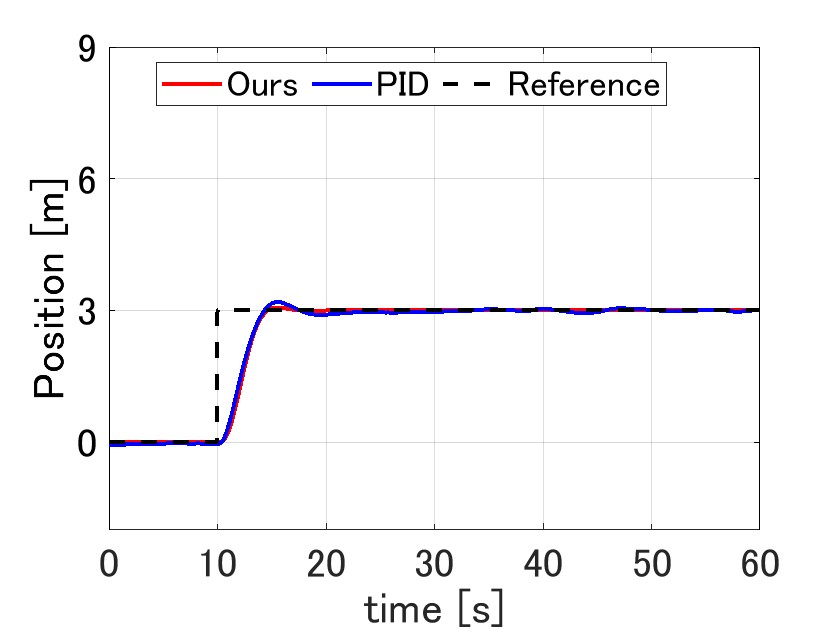}
		\label{fig:step6}
	} \\
	\subfloat[L-shape $Z$]{
		\includegraphics[width=4.6cm]{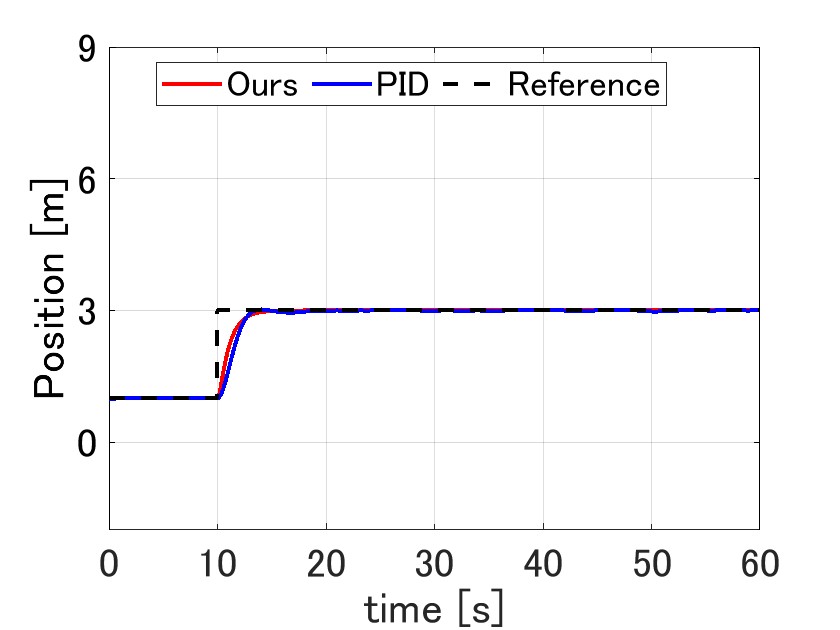}
		\label{fig:step7}
	}
	\subfloat[L-shape $\psi$]{
		\includegraphics[width=4.6cm]{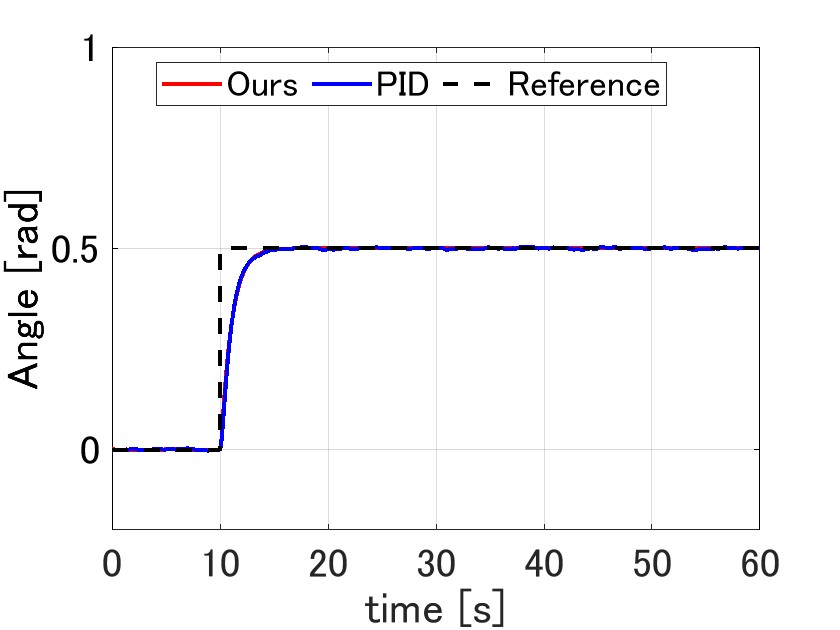}
		\label{fig:step8}
	}

	\caption{Time series of the position, yaw angle, and reference of the
	Rectangular- and L-shaped payloads using the proposed controller and PID controller (PID).}
	\label{fig:step}
\end{figure*}

\section{PID controller}\label{sec_pid}	
The parameters of the PID control, which is the comparison target of the proposed control, were designed to be the same as the rising time of the proposed control.
Herein, the rising time is the time required for the response to increase from 10 \% to 90 \% of the way from the initial to the steady-state value.
Each rising time is presented in Table \ref{table:f}, and a comparison of responses is shown in Fig.~\ref{fig:step}.
We evaluated the step responses from $0.0$ to $3.0$ m for $X$ and $Y$, from $1.0$ to $3.0$ m for $Z$, and from $0.0$ rad to $0.5$ rad for $\psi$.

\end{document}